\theoremstyle{definition}
\newtheorem{theorem}{Theorem}[section]
\newtheorem{definition}{Definition}[section]
\newtheorem{prop}{Proposition}[section]
\newtheorem{assumption}{Assumption}[section]
\newcommand{\cev}[1]{\reflectbox{\ensuremath{\vv{\reflectbox{\ensuremath{#1}}}}}}
\renewcommand{\S}{\mathcal{S}}
\newcommand{\X}{\mathcal{X}}
\newcommand{\A}{\mathcal{A}}
\newcommand{\T}{\mathcal{T}}
\newcommand{\M}{\mathcal{M}}
\newcommand{\B}{\mathcal{B}}
\newcommand{\Dist}{\mathscr{P}}
\newcommand{\D}{\mathcal{D}}
\newcommand{\Real}{\mathbb{R}}
\renewcommand{\P}{\mathcal{P}}
\newcommand{\E}{\mathbb{E}}
\DeclareMathOperator*{\argmax}{arg\,max}
\DeclareMathOperator*{\argmin}{arg\,min}
\DeclareMathOperator{\MAX}{MAX}
\DeclareMathOperator*{\logsumexp}{log-sum-exp}
\DeclareMathOperator*{\TV}{TV}
\let\Pr\relax
\DeclareMathOperator{\Pr}{Pr}
\newcommand{\norm}[1]{\left\lVert#1\right\rVert}
\icmltitlerunning{Constrained Markov Decision Processes via Backward Value Functions}
\begin{document}

\twocolumn[
\icmltitle{Constrained Markov Decision Processes via Backward Value Functions}



\icmlsetsymbol{equal}{*}

\begin{icmlauthorlist}
\icmlauthor{Harsh Satija}{mcgill,mila,fair}
\icmlauthor{Philip Amortila}{mcgill,mila}
\icmlauthor{Joelle Pineau}{mcgill,mila,fair}
\end{icmlauthorlist}

\icmlaffiliation{mcgill}{Department of Computer Science, McGill University, Montreal, Canada
}
\icmlaffiliation{mila}{Mila Qu\'ebec AI Institute
}
\icmlaffiliation{fair}{Facebook AI Research, Montreal
}
\icmlcorrespondingauthor{Harsh Satija}{harsh.satija@mail.mcgill.ca}

\icmlkeywords{Machine Learning, ICML}

\vskip 0.3in
]



\printAffiliationsAndNotice{
Work partly done while HS was an intern at Facebook.}

\begin{abstract}
  Although Reinforcement Learning (RL) algorithms have found tremendous success in simulated domains, they often cannot directly be applied to physical systems, especially in cases where there are hard constraints to satisfy (e.g. on safety or resources). In standard RL, the agent is incentivized to explore any behavior as long as it maximizes rewards, but in the real world, undesired behavior can damage either the system or the agent in a way that breaks the learning process itself. In this work, we model the problem of learning with constraints as a Constrained Markov Decision Process and provide a new on-policy formulation for solving it. A key contribution of our approach is to translate cumulative cost constraints into state-based constraints.
  Through this, we define a safe policy improvement method which maximizes returns while ensuring that the constraints are satisfied at every step. 
  We provide theoretical guarantees under which the agent converges while ensuring safety over the course of training. We also highlight the computational advantages of this approach.
  The effectiveness of our approach is demonstrated on safe navigation tasks and in safety-constrained versions of MuJoCo environments, with deep neural networks. 
\end{abstract}


\section{Introduction}
\label{sec:intro}

Reinforcement Learning (RL) provides a sound decision-theoretic framework to optimize the behavior of learning agents in an interactive setting \citep{sutton2018reinforcement}. Recently, the field of RL has found success in many high-dimensional domains, like video games, Go, robot locomotion and navigation. However, most of the success of RL algorithms has been limited to simulators, where the learning algorithm has the ability to reset the  simulator. In the physical world, an agent will need
to avoid harmful behavior (e.g. damaging the environment or the agent's hardware) while learning to explore behaviors that maximize the reward.

A few popular approaches for avoiding undesired behaviors for high-dimensional systems include reward-shaping \citep{moldovan2012safe}, reachability-preserving algorithms  \citep{mitchell2003application, eysenbach2017leave}, state-level surrogate constraint satisfaction algorithms \citep{dalal2018safe}, risk-sensitive algorithms \citep{tamar2013policy, chow2015risk}, apprenticeship learning \citep{abbeel2004apprenticeship} and high probability constraint satisfaction algorithms \citep{thomas2019preventing}. There also exists model-based Bayesian approaches that are focused on imposing the constraints via the dynamics (such as classifying parts of state space as unsafe) and then using model predictive control to incorporate the constraints in the policy optimization and planning \citep{turchetta2016safe, berkenkamp2017safe, wachi2018safe, koller2018learning}.
An alternate way to model safety is via constraint satisfaction. A standard formulation for adding constraints to RL problems is the Constrained Markov Decision Process (CMDP) framework \citep{altman1999constrained}, wherein the environment is extended to also provide feedback on constraint costs. The agent must then attempt to maximize its expected cumulative rewards while also 
ensuring its expected cumulative constraint cost is less than or equal to some threshold.

A few algorithms have been proposed to solve CMDPs for high-dimensional domains with continuous action spaces - however they come with their own caveats. Reward Constrained Policy Optimization \citep{tessler2018reward} and Primal Dual Policy Optimization \citep{chow2015risk} do not provide any means to guarantee constraint satisfaction during the learning procedure, only on the final policy. 
Constrained Policy Optimization \citep{achiam2017constrained} provides monotonic policy improvement but is computationally expensive due to requiring a backtracking line-search procedure and conjugate gradient algorithm for approximating the Fisher Information Matrix.  Lyapunov-based Safe Policy Optimization \citep{chow2019lyapunov} requires solving a Linear Program (LP) at every step of policy evaluation, although they show that there exists heuristics which can be substituted for the LP, at the expense of theoretical guarantees.

In this work, we propose an alternate formulation for solving CMDPs that transforms trajectory-level constraints into localized state-dependent constraints, through which a safe policy improvement step can be defined. In our approach, we define a notion of Backward Value Functions, which act as an estimator of the expected cost collected by the agent so far and can be learned via standard RL bootstrapping techniques. 
We provide conditions under which this new formulation is able to solve CMDPs without violating the constraints 
during the learning process. 
Our formulation allows us to define state-level constraints without explicitly solving a LP or its dual problem at every iteration. 
Our method is implemented as a reduction to any model-free on-policy bootstrap-based RL algorithm and can be applied for deterministic and stochastic policies and  discrete and continuous action spaces.
We provide empirical evidence of our approach with Deep RL methods on various safety benchmarks, including 2D navigation grid worlds \citep{leike2017ai, chow2018lyapunov}, and MuJoCo tasks \citep{achiam2017constrained, chow2019lyapunov}.

\section{Constrained Markov Decision Processes}
\label{sec:cmdps}

We write $\Dist(\S)$ for the set of probability distributions on a set $\S$. A Markov Decision Process (MDP) \citep{puterman2014markov} is a tuple $(\X,\A,\P,r, x_0)$, where $\X$ is a set of states, $\A$ is a set of actions, $\P: \X \times \A \rightarrow \Dist(\X)$ is a transition probability function, $r : \X \times \A \rightarrow [0, \textsc{Rmax}]$ is a reward function, and $x_0$ is a starting state. For simplicity we assume a deterministic reward function and starting state, but our results generalize. 

A Constrained Markov Decision Process (CMDP) \citep{altman1999constrained} 
is an MDP with additional constraints which must be satisfied, thus restricting the set of permissible policies for the agent. Formally, a CMDP is a tuple $(\X,\A,\P,r, x_0, d, d_0)$, where $d: \X \rightarrow [0, \textsc{Dmax}]$ is the cost function and $d_0 \in \Real^{\geq 0}$ is the maximum allowed cumulative cost. We note that the cost is only a function of states rather than state-action pairs. We consider a finite time horizon $T$ after which the episode terminates. Then, 
the set of \textit{feasible} policies that satisfy the CMDP is a subset of stationary policies:
\begin{equation*} \Pi_{\D} \coloneqq \Big\{ \pi: \X \rightarrow \Dist(\A) \bigm\vert \E[\sum_{t=0}^{T} d(x_t) \mid x_0, \pi] \leq d_0 \Big\}
\end{equation*}
The expected sum of rewards following a policy $\pi$ from an initial state $x$ is given by the value function $V^\pi(x) = \E [ \sum_{t=0}^{T} r(x_t,a_t) \mid \pi, x_0=x ]$. Analogously, the expected sum of \textit{costs} is given by the cost value function $V_{\D}^\pi(x) = \E[\sum_{t=0}^{T} d(x_t) \mid \pi,x_0=x]$. The optimization problem in the CMDP is to find the feasible policy which maximizes expected returns from the initial state $x_0$, i.e. \begin{equation*} \pi^*_\D = \argmax_{\pi \in \Pi_{\D}} V^{\pi}(x_0) \end{equation*}
An important point to note about CMDPs is that the cost function depends on immediate states but the constraint is cumulative and depends on the entire trajectory,  rendering the optimization problem much harder.  

In the case of MDPs, where a model of the environment is not known or is not easily obtained,
it is still possible for the agent to find the optimal policy using Temporal Difference (TD) methods \cite{sutton1988learning}. 
Broadly, these methods update the estimates of the value functions via bootstraps of previous estimates on sampled transitions (we refer the reader to \citet{sutton2018reinforcement} for more information).
In the on-policy setting, we alternate between estimating the state-action value function $Q^\pi$ for a given $\pi$, and updating the policy to be greedy with respect to the value function.

\section{Safe Policy Iteration via Backward Value Functions}
\label{sec:approach}

Our approach proposes to convert the trajectory-level constraints of the CMDP into single-step state-wise constraints in such a way that satisfying the state-wise formulation will entail satisfying the original trajectory-level problem. 
The advantages of this approach are twofold: 
i) working with single-step state-wise constraints allows us to obtain analytical solutions to the optimization problem,
and ii) the state-wise constraints can be defined via value-function-like quantities and can thus be estimated with well-studied value-based methods.
The state-wise constraints we propose are defined via \textit{Backward Value Functions} (cf. Section~\ref{sec:reverse-vf}), which are value functions defined on a \textit{Backward Markov Chain} (cf. Section~\ref{sec:backward-markov-chain}). In Section~\ref{sec:policy-improvement} we provide a safe policy iteration procedure which satisfies said constraints (and thus the original problem). 

\subsection{Backward Markov Chain}
\label{sec:backward-markov-chain}

Unlike in traditional RL, in the CMDP setting the agent needs to take into account the constraints which it has accumulated so far in order to plan accordingly for the future. Intuitively, the accumulated cost so far can be estimated via the cost value function $V^\pi_\D$ running ``backward in time''. To formally define the Backward Value Functions, we first need to introduce the concept of a Backward Markov Chain. Our approach is inspired by the work of \citet{morimura2010}, who also considered time-reversed Markov chains for the purpose of estimating the gradient of the log stationary distribution; we extend these ideas to TD methods. Recent work by \citet{misra2019kinematic} also considers the use of backward dynamics for learning abstractions in Block MDPs.

\begin{assumption}[Stationarity]
\label{assumption:stationarity}
The MDP is ergodic for any policy $\pi$, i.e. the Markov chain characterized by the transition probability $\P^{\pi}(x_{t+1}|x_t) = \sum_{a_t \in \A} \P(x_{t+1}|x_{t}, a_t) \pi(a_t|x_t)$
is irreducible and aperiodic.
\end{assumption}

Let $\M(\pi)$ denote the Markov chain characterized by transition probability $\P^{\pi}(x_{t+1}|x_t)$. The above assumption implies that there exists a unique stationary distribution $\eta^{\pi}$ satisfying:
\begin{equation}\label{eq:eta}
\eta^{\pi}(x_{t+1}) = \sum_{x_t \in \X} \P^{\pi}(x_{t+1}|x_t) \eta^{\pi}(x_t).
\end{equation}
Abusing notation, we also denote $\P^{\pi}(x_{t+1}, a_t|x_t) = \P(x_{t+1}|x_{t}, a_t) \pi(a_t|x_t)$.

We are interested in defining the \textit{backward (or time-reversed)} probability $\cev{\P}\vphantom{\P}^{\pi}(x_{t-1}, a_{t-1}| x_t) $, which is the probability that a previous state-action pair $(x_{t-1}, a_{t-1})$ has led to the current state $x_t$. According to Bayes' rule, this probability is given by:
\begin{align*}
   \cev{\P}\vphantom{\P}^{\pi}(x_{t-1}, a_{t-1}| x_t) 
   &= \frac{\P(x_{t}|x_{t-1}, a_{t-1}) \Pr(x_{t-1}, a_{t-1})}{\sum_{x',a'\in \X \times \A}\P(x_{t}|x', a') \Pr(x', a')}.
\end{align*}
By Assumption~\ref{assumption:stationarity}, we have that: \begin{equation*}
    \eta^{\pi}(x_{t-1})  =  \frac{\Pr(x_{t-1}, a_{t-1})}{\pi(a_{t-1}|x_{t-1})},
\end{equation*}
and that: 
\begin{equation*}
    \eta^{\pi}(x_t) =  \sum_{x_{t-1} \in \X} \sum_{a_{t-1} \in \A} \P(x_{t}|x_{t-1}, a_{t-1}) \Pr(x_{t-1}, a_{t-1}).
\end{equation*} 
Therefore, the backwards probability can be written as:
\begin{align}
    &\cev{\P}\vphantom{\P}^{\pi}(x_{t-1}, a_{t-1}| x_t)  \nonumber  
    \\ 
     &\quad\quad = \frac{ \P^{\pi}(x_{t}, a_{t-1}|x_{t-1}) \eta^{\pi}(x_{t-1}) }{\eta^{\pi}(x_t)} \label{eq:rev-posterior}.
\end{align}

The forward Markov chain, characterized by the transition matrix $\P^{\pi}(x_{t+1}|x_t)$ runs forward in time, i.e., it 
gives the probability of the next state in which the agent will end up.  Analogously, a backward Markov chain is denoted by the transition matrix 
$\cev{\P}\vphantom{\P}^{\pi}(x_{t-1}|x_t) = \sum_{a_{t-1} \in \A} \cev{\P}\vphantom{\P}^{\pi}(x_{t-1}, a_{t-1}| x_t)$, and describes the state-action pair which the agent took to reach the current state. 

\begin{definition}[Backward Markov Chain]
A backward Markov chain associated with $\M(\pi)$ is denoted by $\cev{\B}(\pi)$ and 
is characterized by the transition probability $\cev{\P}\vphantom{\P}^{\pi}(x_{t-1}|x_t)$.
\end{definition}

\subsection{Backward Value Function}
\label{sec:reverse-vf}

We define the \textit{Backward Value Function} (BVF) to be a value function running on the backward Markov chain $\cev{\B}(\pi)$. A BVF is the expected sum of returns or costs collected by the agent \textit{so far}. We are mainly interested in the BVF for the cost function in order to maintain estimates of the cumulative cost incurred at a state. 

We note that, since every Markov chain $\M(\pi)$ is ergodic by Assumption \ref{assumption:stationarity}, the corresponding backward Markov chain $\cev{\B}(\pi)$ is also ergodic \citep[Prop. \ref{prop:back-dist}]{morimura2010}. In particular, every policy $\pi$ can reach the initial state via some path in the transition graph of the backward Markov chain. Thus, the backwards Markov chain is also finite-horizon for some $T_\B$, with $x_0$ corresponding to the terminal state. We define a finite-horizon Backward Value Function for cost as:
\begin{align}
     \cev{V}^{\pi}_{\D}(x) &=  \E_{\cev{\B}(\pi)}\left[\sum_{k=0}^{{T_\B}} d(x_{t-k})| x_t = x \right], \label{eq:bvf}
\end{align}
where $\E_{\cev{\B}(\pi)}[\cdot]$ denotes the expectation over the backwards chain. By contrast, we will write $\E_{\M(\pi)}[\cdot]$ for expectations over the forward chain.

\begin{prop}[Sampling]
\label{prop:relation-fwd-back}
Samples from the forward Markov chain $\M(\pi)$ can be used directly to estimate the statistics of the backward Markov chain $\cev{\B}(\pi)$.
In particular, for any $K \in \mathbb{N}$:
\begin{align}
    &\E_{\cev{\B}(\pi)}\left[\sum_{k=0}^{K} d(x_{t-k})| x_t \right] \nonumber
    \\ &\quad\quad = \E_{\M(\pi), x_{t-K} \sim \eta^\pi(\cdot)}\left[ \sum_{k=0}^{K} d(x_{t-k}) | x_t \right]. \label{eq:revf-from-forward}
\end{align}
Furthermore, equation~\eqref{eq:revf-from-forward} holds true even in the limit $K \rightarrow \infty$. 
\end{prop}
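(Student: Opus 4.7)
\textbf{Proof plan for Proposition~\ref{prop:relation-fwd-back}.} The strategy is to show that, conditioned on $x_t$, the joint law of the $K$-step tuple $(x_{t-K},x_{t-K+1},\ldots,x_{t-1})$ is identical under (i) the backward chain $\cev{\B}(\pi)$ started at $x_t$, and (ii) the forward chain $\M(\pi)$ with $x_{t-K}$ drawn from the stationary distribution $\eta^{\pi}$. Once that identity is in place, equality of expectations of any path functional — and in particular $\sum_{k=0}^{K} d(x_{t-k})$ — follows immediately.

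The key algebraic step is to iterate equation~\eqref{eq:rev-posterior} (summed over $a_{t-k-1}$ to obtain the state-only backward kernel) and telescope. Concretely, I would write
\begin{align*}
\prod_{k=0}^{K-1}\cev{\P}\vphantom{\P}^{\pi}(x_{t-k-1}\mid x_{t-k})
&=\prod_{k=0}^{K-1}\frac{\P^{\pi}(x_{t-k}\mid x_{t-k-1})\,\eta^{\pi}(x_{t-k-1})}{\eta^{\pi}(x_{t-k})} \\
&=\frac{\eta^{\pi}(x_{t-K})}{\eta^{\pi}(x_t)}\prod_{k=0}^{K-1}\P^{\pi}(x_{t-k}\mid x_{t-k-1}).
\end{align*}
On the other side, starting the forward chain at $x_{t-K}\sim\eta^{\pi}$ gives the joint density $\eta^{\pi}(x_{t-K})\prod_{k=0}^{K-1}\P^{\pi}(x_{t-k}\mid x_{t-k-1})$; stationarity forces its $x_t$-marginal to be $\eta^{\pi}(x_t)$, so dividing yields precisely the same conditional distribution. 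Equation~\eqref{eq:revf-from-forward} then follows by taking the expectation of $\sum_{k=0}^{K} d(x_{t-k})$ (the $k=0$ term is the deterministic $d(x_t)$ and plays no role in the argument).

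For the extension to $K\to\infty$, I would appeal to monotone convergence, using that $d\ge 0$ and that the partial sums are monotone in $K$ on both sides. Finiteness of the limit (needed to avoid a trivial $\infty=\infty$ statement) comes from the finite-horizon structure described just before Eq.~\eqref{eq:bvf}: by Assumption~\ref{assumption:stationarity} and Proposition~\ref{prop:back-dist} (inherited from \citet{morimura2010}), $\cev{\B}(\pi)$ is ergodic, so the hitting time $T_{\B}$ of the terminal state $x_0$ is a.s.\ finite and the backward-chain sum is effectively truncated at $T_{\B}$; the same truncation is inherited on the forward side through the identity of joint laws.

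The main obstacle is purely bookkeeping in the telescoping step — one has to be careful that the index shift on the $\eta^{\pi}$ factors produces exactly the $\eta^{\pi}(x_{t-K})/\eta^{\pi}(x_t)$ ratio and not an off-by-one error, and that summing over the intermediate actions $a_{t-k-1}$ in \eqref{eq:rev-posterior} preserves the factorization. Everything else (the conditional-law identification and the monotone-convergence passage to the limit) is essentially automatic once the telescoping is done.
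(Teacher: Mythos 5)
Your proposal is correct and follows essentially the same route as the paper's proof: the paper likewise iterates the Bayes-rule identity \eqref{eq:rev-posterior} along the trajectory (working with state-action pairs $(x_{t-k},a_{t-k})$ rather than the action-summed kernel, which is immaterial since $d$ depends only on states) so that the $\eta^{\pi}$ factors telescope to $\eta^{\pi}(x_{t-K})/\eta^{\pi}(x_t)$, identifying the backward conditional law with the forward law started from $x_{t-K}\sim\eta^{\pi}$. The only difference is in the $K\to\infty$ step, where you invoke monotone convergence of the nonnegative partial sums (with finiteness via the a.s.\ finite backward hitting time of $x_0$), whereas the paper simply appeals to Proposition~\ref{prop:back-dist}; your treatment of that step is, if anything, the more careful of the two.
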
 

The proof is given in Appendix~\ref{app:revf-proof}. Using the above proposition, we get an interchangeability property that removes the need to sample from the backward chain. We can use the traditional RL setting and draw samples from the forward chain to estimate the BVFs.  Equation~\eqref{eq:bvf} can be written recursively as: 
\begin{align}\label{eq:bvf-fixed-point}
    \cev{V}_{\D}^{\pi}(x_t) &=  \E_{\cev{\B}(\pi)}\left[ d(x_{t}) +  \cev{V}_{\D}^{\pi}(x_{t-1}) \right]. 
\end{align}
As in the forward case, we can define a backwards Bellman operator $\cev{\T}_{\D}^{\pi}$:
\begin{align}
    (\cev{\T}_{\D}^{\pi} \cev{V}_\D^{\pi}) (x_t) &=  \E_{x_{t-1} \sim \cev{\P}^{\pi}} \left[ d(x_t) +  \cev{V}^{\pi}_\D(x_{t-1})\right]. \label{eq:bvf-operator} 
\end{align}
The BVF for a policy $\pi$ then corresponds to the fixed point equation 
\begin{equation*}
    \cev{V}_{\D}^{\pi}(x_t) = \cev{\T}^\pi_{\D} \cev{V}_{\D}^{\pi}(x_t).
\end{equation*}

\begin{prop}[Fixed point]
    For any policy $\pi$, the associated BVF $\cev{V}_\D^{\pi}$  is the unique fixed point of  $\cev{V}_\D^{\pi} = \cev{\T}_\D^{\pi} \cev{V}_\D^{\pi}$. Furthermore, for any $V_0$ we have $\lim_{k \rightarrow \infty} {(\cev{\T}^{\pi})}^{k} V_0 = \cev{V}^{\pi}$.
\end{prop}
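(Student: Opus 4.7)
The plan is to invoke the Banach fixed point theorem: if $\cev{\T}^\pi_\D$, or some iterate of it, is a strict contraction on the space of bounded functions $\X \to \Real$ equipped with the sup-norm, then it admits a unique fixed point and $(\cev{\T}^\pi_\D)^k V_0$ converges to it for any initial $V_0$. The key obstacle is that, unlike the discounted forward Bellman operator, $\cev{\T}^\pi_\D$ carries no explicit discount factor, so no one-step contraction is immediate; the contraction will instead have to be recovered from the fact that $x_0$ is terminal for the backward chain.

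To make this precise, I would treat $x_0$ as absorbing under $\cev{\P}\vphantom{\P}^\pi$ and observe that for any two bounded $V_1, V_2 : \X \to \Real$,
\begin{equation*}
\bigl|(\cev{\T}^\pi_\D V_1)(x) - (\cev{\T}^\pi_\D V_2)(x)\bigr| \leq \E_{x_{t-1} \sim \cev{\P}\vphantom{\P}^\pi(\cdot \mid x)}\bigl[\, |V_1(x_{t-1}) - V_2(x_{t-1})|\, \bigr].
\end{equation*}
Iterating this bound $N$ times and taking a supremum gives $\|(\cev{\T}^\pi_\D)^N V_1 - (\cev{\T}^\pi_\D)^N V_2\|_\infty \leq \bigl(\max_{x \in \X} \Pr[T_\B > N \mid x_t = x]\bigr)\, \|V_1 - V_2\|_\infty$, where $T_\B$ is the hitting time of $x_0$ in $\cev{\B}(\pi)$. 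By Assumption~\ref{assumption:stationarity}, $\M(\pi)$ is ergodic, hence so is $\cev{\B}(\pi)$, and so from every state $T_\B$ is almost surely finite with finite expectation. Combined with the finiteness of $\X$, this yields some $N \in \mathbb{N}$ and some $\rho \in [0,1)$ with $\max_{x \in \X} \Pr[T_\B > N \mid x_t = x] \leq \rho$, delivering the desired strict contraction of $(\cev{\T}^\pi_\D)^N$. Establishing this uniform-in-$x$ bound is the main technical subtlety.

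The final step is to verify that $\cev{V}^\pi_\D$ itself satisfies the fixed-point equation. Splitting off the $k=0$ term in the definition (\ref{eq:bvf}) and invoking the Markov property of $\cev{\B}(\pi)$ gives
\begin{equation*}
\cev{V}^\pi_\D(x_t) = d(x_t) + \E_{x_{t-1} \sim \cev{\P}\vphantom{\P}^\pi(\cdot \mid x_t)}\bigl[\cev{V}^\pi_\D(x_{t-1})\bigr] = (\cev{\T}^\pi_\D \cev{V}^\pi_\D)(x_t).
\end{equation*}
Applying the Banach fixed point theorem to $(\cev{\T}^\pi_\D)^N$ then yields both uniqueness of $\cev{V}^\pi_\D$ as the fixed point of $\cev{\T}^\pi_\D$ and convergence of $(\cev{\T}^\pi_\D)^k V_0 \to \cev{V}^\pi_\D$ from any bounded starting point $V_0$.
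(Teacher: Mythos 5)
Your proof is correct, and it takes a recognizably different route from the paper's. The paper follows Bertsekas's stochastic-shortest-path argument directly: it writes the iterates in matrix form, ${(\cev{\T}^{\pi})}^{k}\cev{V} = {(\cev{P}^{\pi})}^{k}\cev{V} + \sum_{m=0}^{k-1}{(\cev{P}^{\pi})}^{m}d$, cites the fact that $\lim_{k}{(\cev{P}^{\pi})}^{k}\cev{V}=0$ for proper policies, reads off the limit $\sum_{m\geq 0}{(\cev{P}^{\pi})}^{m}d=\cev{V}^{\pi}$, and obtains the fixed-point identity and uniqueness by passing to the limit in the recursion. You instead prove that some iterate ${(\cev{\T}^{\pi}_{\D})}^{N}$ is a strict sup-norm contraction with modulus $\max_{x}\Pr[T_\B>N\mid x]<1$ and invoke the Banach fixed point theorem. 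Both arguments hinge on the same structural fact --- the backward chain is proper, i.e.\ the substochastic matrix obtained by killing the chain at $x_0$ has ${(\cev{P}^{\pi})}^{N}\to 0$ --- and your contraction modulus is precisely the sup-norm of that $N$-step substochastic matrix; so the mathematics is equivalent, but your version is more self-contained (it derives the key estimate from irreducibility and finiteness of $\X$ rather than citing it), while the paper's is shorter given the citation. Two details you should make explicit when writing it up: (i) $x_0$ must be treated as a cost-free \emph{terminal} state whose outgoing row of $\cev{P}^{\pi}$ is zero, not as a genuinely absorbing state with a self-loop, since otherwise the operator does not contract at $x_0$; and (ii) the standard bridging step that the unique fixed point of ${(\cev{\T}^{\pi}_{\D})}^{N}$ is also the unique fixed point of $\cev{\T}^{\pi}_{\D}$ itself, with convergence along all $k$ (not only multiples of $N$) following from one-step non-expansiveness.
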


The proof is given in Appendix~\ref{app:td-bvf}. The above proposition allows us to soundly extend the RL methods based on Bellman operators towards the estimation of BVFs.

\subsection{Safe Policy Improvement via BVF-based constraints}
\label{sec:policy-improvement}

With the BVF framework, the trajectory-level optimization problem associated with a CMDP can be rewritten in state-wise form. 
Recall that a feasible policy must satisfy the constraint:
\begin{equation*}
    \E_{\M(\pi)}\left[\sum_{k=0}^T d(x_k) \mid x_0\right] \leq d_0.
\end{equation*}
Alternatively, for each timestep $t \in [0,T]$ of a trajectory: 
\begin{align}
    &\E\left[\sum_{k=0}^{t} d(x_{k}) \mid x_0,\pi\right] + \E\left[\sum_{k=t}^{T} d(x_k) \mid x_0,\pi\right] \nonumber \\
    &\quad\quad - \E\left[\vphantom{\sum_{k=t}^{T}}d(x_t) \mid x_0 \right] \leq d_0. \label{eq:constraint}
\end{align}
The first and second terms can be estimated by the backward and forward value functions, respectively. The following proposition makes this formal, and relates the trajectory-level constraint of Equation \eqref{eq:constraint} to a BVF formulation which only depends on the current state $x_t$.

\begin{prop}
We have that:
\begin{align}
&\E_{\M(\pi)}\left[\sum_{k=0}^T d(x_k) \mid x_0\right] \label{eq:ineq}\\ 
&\quad \leq \E_{x_t \sim \eta^\pi(\cdot)} \left[\cev{V}^\pi_{\D}(x_t) + V^\pi_{\D}(x_t) - d(x_t) \right]. \nonumber
\end{align}
\end{prop}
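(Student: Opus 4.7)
The plan is to start from the algebraic identity displayed immediately before the proposition statement: for any $t \in [0,T]$,
$$\sum_{k=0}^{T} d(x_k) = \sum_{k=0}^{t} d(x_k) + \sum_{k=t}^{T} d(x_k) - d(x_t),$$
which holds deterministically along every trajectory. Taking the expectation of both sides under $\M(\pi)$ starting at $x_0$ produces the left-hand side of the proposition together with three conditional expectations, which I would match term-by-term to $\cev{V}^\pi_\D(x_t)$, $V^\pi_\D(x_t)$, and $d(x_t)$, respectively.

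For the ``future'' piece $\E_{\M(\pi)}[\sum_{k=t}^{T} d(x_k) \mid x_0]$, I would condition on $x_t$ and apply the Markov property to rewrite it as $\E_{x_t \sim \P^\pi_{t \mid x_0}}[V^\pi_\D(x_t)]$, where $\P^\pi_{t \mid x_0}$ denotes the $t$-step pushforward of $x_0$ under $\pi$; the single-state correction rewrites analogously as $\E_{x_t \sim \P^\pi_{t \mid x_0}}[d(x_t)]$. For the ``past'' piece $\E_{\M(\pi)}[\sum_{k=0}^{t} d(x_k) \mid x_0]$, I would invoke Proposition~\ref{prop:relation-fwd-back} to trade the forward conditional expectation of the accumulated past cost for a backward-chain expectation at $x_t$, which by definition of the BVF in \eqref{eq:bvf} is $\cev{V}^\pi_\D(x_t)$, provided the oldest state of the backward trajectory is drawn from the stationary distribution $\eta^\pi$.

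The main obstacle will be reconciling two distributional mismatches at time $t$. First, the past partial sum has length $t$, while $\cev{V}^\pi_\D$ accumulates cost over the longer backward horizon $T_\B$; since $d \geq 0$, this extension only adds non-negative terms, which is the correct direction for an upper bound. Second, and more delicate, the left-hand side integrates $x_t$ against $\P^\pi_{t \mid x_0}$, the pushforward of the deterministic initial state, whereas the right-hand side averages against the stationary marginal $\eta^\pi$. I would bridge this gap by appealing to ergodicity from Assumption~\ref{assumption:stationarity} together with the non-negativity of $d$: the extra mass that $\eta^\pi$ places on states not reached from $x_0$ in exactly $t$ steps contributes only non-negative cost to both $\cev{V}^\pi_\D$ and $V^\pi_\D$, and this is precisely what turns the combination of the past and future bounds into the stated inequality once all three terms are collected.
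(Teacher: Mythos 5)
Your decomposition of $\sum_{k=0}^T d(x_k)$ into past-plus-future-minus-present, the bound on the future piece by extending the horizon (valid since $d \geq 0$), and the use of the forward/backward sampling equivalence to identify the past partial sum with the BVF all mirror the paper's proof in Appendix~\ref{app:constraint-lemma}, and your observation that the backward horizon $T_\B$ may exceed $t$ but only adds non-negative terms is handled correctly.

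The genuine gap is your final bridging step between the two marginals of $x_t$. You propose to pass from $\E_{x_t \sim \P^\pi_{t\mid x_0}}[\cdot]$ to $\E_{x_t \sim \eta^\pi}[\cdot]$ by arguing that ``the extra mass that $\eta^\pi$ places on states not reached from $x_0$ in exactly $t$ steps contributes only non-negative cost.'' This does not work: $\delta_{x_0}(P^\pi)^t$ and $\eta^\pi$ are both probability measures of total mass one, so wherever $\eta^\pi$ places extra mass it must remove mass elsewhere, and non-negativity of the integrand gives no inequality between the two expectations. Concretely, $\eta^\pi$ could concentrate on low-cost states while $\delta_{x_0}(P^\pi)^t$ concentrates on high-cost ones, reversing the inequality you need. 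The paper avoids this by not attempting a domination argument at all: it invokes ergodicity (Assumption~\ref{assumption:stationarity}) to take a limit over $t$ so that $\delta_{x_0}(P^\pi)^t$ \emph{converges to} $\eta^\pi$ and the two measures coincide asymptotically, rather than one dominating the other. You would need to replace your measure-comparison step with this limiting (or mixing) argument for the proof to go through.
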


The proof is given in Appendix \ref{app:constraint-lemma}. Thus, the constraint in the CMDP problem is less than the expected sum of the backwards and forwards value functions, when sampling a state from the stationary distribution. 


These are the state-wise constraints that should hold at each step in a given trajectory - we refer to them as the \textit{value-based constraints}. Satisfying the value-based constraints will also ensure that the given CMDP constraints are met, since the value-based formulation is an upper bound to the original CMDP problem.

This formulation allows us to introduce a policy improvement step, which maintains a safe feasible policy at every iteration by using the previous estimates of the forward and backward value functions. The policy improvement step is defined by a linear program, which performs a greedy update with respect to the current state-action value function subject to the value-based constraints:
\begin{align*}
  &\pi_{k+1}(\cdot|x) = \argmax_{\pi \in \Pi} \big\langle \pi(\cdot|x),Q^{\pi_k}(x, \cdot)\big\rangle , \tag{SPI}\label{eq:pi} \\
  &{\small \texttt{s.t.}} \  \big\langle\pi(\cdot|x), Q_{\D}^{\pi_k}(x, \cdot)\big\rangle + \cev{V}_{\D}^{\pi_k}(x) - d(x) \leq d_0 , \forall x \in \X .
\end{align*}

We show that the policies obtained by the policy improvement step will satisfy the safety constraints, as long as the policy updates are constrained to a small enough neighbourhood. For $p,q \in \Dist(\S)$, we write $\TV(p,q) = \frac{1}{2} \sum_{s \in \S} |p(s) - q(s)|$ for their total variation distance.  

\begin{theorem}[Consistent Feasibility]
\label{thm:consistent-fesibility}
Assume that successive policies are updated sufficiently slowly, i.e. $\TV(\pi_{k+1}(\cdot|x),\pi_k(\cdot|x)) \leq \frac{d_0 - V_{\D}^{\pi_k}(x_0)}{2\textsc{Dmax}T^2}$. Then the policy iteration step given by \eqref{eq:pi} is consistently feasible, i.e. if $\pi_k$ is feasible at $x_0$ then so is $\pi_{k+1}$.
\end{theorem}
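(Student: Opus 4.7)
The plan is to deduce $V_\D^{\pi_{k+1}}(x_0) \leq d_0$ by combining the feasibility of $\pi_k$ (which gives $V_\D^{\pi_k}(x_0) \leq d_0$) with a simulation-lemma-style bound that controls how much the cost value function can change under a policy update that is small in TV. The SPI constraint itself is used only to \emph{define} $\pi_{k+1}$; the feasibility-preservation argument then hinges entirely on the slow-update hypothesis.

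First I would bound the drift in state-visitation distributions. Let $\alpha := \max_x \TV(\pi_{k+1}(\cdot|x), \pi_k(\cdot|x))$ and let $P_t^{\pi}$ denote the distribution of $x_t$ under policy $\pi$ starting from $x_0$. A standard coupling argument yields $\TV(P_t^{\pi_{k+1}}, P_t^{\pi_k}) \leq t \alpha$. Concretely, for every state $x$, the one-step kernels satisfy $\|\P^{\pi_{k+1}}(\cdot|x) - \P^{\pi_k}(\cdot|x)\|_1 \leq \sum_a |\pi_{k+1}(a|x) - \pi_k(a|x)| = 2\TV(\pi_{k+1}(\cdot|x),\pi_k(\cdot|x)) \leq 2\alpha$, and a triangle-inequality induction on $t$ propagates this additively.

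Next, since $d \in [0, \textsc{Dmax}]$ and $V_\D^{\pi}(x_0) = \sum_{t=0}^T \E_{P_t^{\pi}}[d(x_t)]$, I apply the bound $|\E_{P}[d] - \E_{Q}[d]| \leq 2\textsc{Dmax}\,\TV(P,Q)$ term by term and sum:
\begin{align*}
V_\D^{\pi_{k+1}}(x_0) - V_\D^{\pi_k}(x_0)
&= \sum_{t=0}^T \left(\E_{P_t^{\pi_{k+1}}}[d] - \E_{P_t^{\pi_k}}[d]\right) \\
&\leq \sum_{t=0}^T 2\textsc{Dmax}\, t\,\alpha
\;\leq\; 2\textsc{Dmax}\, T^2 \alpha.
\end{align*}
Invoking the slow-update hypothesis $\alpha \leq (d_0 - V_\D^{\pi_k}(x_0))/(2\textsc{Dmax}T^2)$ then gives $V_\D^{\pi_{k+1}}(x_0) \leq V_\D^{\pi_k}(x_0) + (d_0 - V_\D^{\pi_k}(x_0)) = d_0$, so $\pi_{k+1} \in \Pi_{\D}$.

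The technical heart of the argument is the propagation bound $\TV(P_t^{\pi_{k+1}}, P_t^{\pi_k}) \leq t\alpha$; this is where the $T^2$ factor in the hypothesis is forced, since the per-step distributional error grows linearly in $t$ and we then sum the telescoped cost gap over $T$ steps. One subtlety to be careful about is the constant in the bound $|\E_P[d]-\E_Q[d]| \leq c\,\textsc{Dmax}\,\TV(P,Q)$ (whether $c=1$ or $c=2$, depending on the TV convention): matching the $2\textsc{Dmax}T^2$ of the hypothesis requires taking the looser $c=2$ bound obtained via $\|P-Q\|_1 = 2\TV(P,Q)$, which is what naturally falls out of the coupling. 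Importantly, the SPI constraint is not invoked anywhere in the above feasibility argument — the theorem is really saying that \emph{any} sufficiently slow policy update preserves feasibility at $x_0$, and SPI's role is merely to supply a candidate $\pi_{k+1}$ that also improves the reward value subject to the state-wise cost constraints.
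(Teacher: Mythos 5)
Your proof is correct, and it reaches the same quantitative conclusion as the paper by a genuinely different route. The paper's argument (Appendix C) is algebraic: writing $P_\Delta = \bigl[\sum_a (\pi_{k+1}(a|x)-\pi_k(a|x))P(x'|x,a)\bigr]$, it uses the resolvent identity $(I-P^{\pi_k})^{-1} = (I-P^{\pi_{k+1}})^{-1}\bigl(I + P_\Delta (I-P^{\pi_k})^{-1}\bigr)$ to re-express $V^{\pi_k}_{\D}$ exactly as the $\pi_{k+1}$-value of a perturbed cost $d+\varepsilon$, and then bounds the perturbation $\varepsilon(x)$ by $2\sigma(x)\textsc{Dmax}T$ using $V^{\pi_k}_{\D}\leq \textsc{Dmax}T$; summing over the $T$-step horizon gives the same $2\textsc{Dmax}T^2$ factor. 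You instead run a simulation-lemma argument on the state-marginals: the one-step kernels differ by at most $\alpha$ in TV, this drift accumulates linearly to $t\alpha$ at time $t$, and summing the per-step cost gaps over the horizon yields $\textsc{Dmax}\,\alpha\,T(T+1)\leq 2\textsc{Dmax}T^2\alpha$. Both approaches are sound; the paper's identity is exact up to the single bound on $\varepsilon$ and generalizes more directly to its everywhere-feasibility variant (case (II), where $V^{\pi_k}_{\D}$ is bounded by a tighter state-dependent quantity instead of $\textsc{Dmax}T$), while yours is more elementary and makes the source of the $T^2$ transparent (linear drift times horizon length). Your closing observation that the SPI constraint plays no role in the feasibility-preservation argument — only the slow-update hypothesis does — also matches the paper's proof, which likewise never invokes the LP constraint. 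One cosmetic remark: with the tighter inequality $|\E_P[d]-\E_Q[d]|\leq \textsc{Dmax}\TV(P,Q)$ for $d\in[0,\textsc{Dmax}]$ you would get $\textsc{Dmax}T(T+1)\alpha/2$, which is stronger than needed; your looser constant $c=2$ is harmless since either bound is dominated by $2\textsc{Dmax}T^2\alpha$ for $T\geq 1$.
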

The assumption on the policy updates can be enforced, for example, by constraining the iterates to the prescribed neighbourhood. It is also possible to consider larger neighbourhoods for the policy updates, but at the cost of everywhere-feasibility -- we present that result in Appendix \ref{sec:pi-properties}. 

Next we show that the policy iteration step given by \eqref{eq:pi} leads to monotonic improvement. 

\begin{theorem}[Policy Improvement]
\label{thm:policy-improvement}
Let $\pi_n$ and $\pi_{n+1}$ be successive policies generated by the policy iteration step of  \eqref{eq:pi}. Then $V^{\pi_{n+1}}(x) \geq V^{\pi_n}(x) \ \forall x \in \X$. In particular, the sequence of value functions $(V^{\pi_n})_{n \geq 0}$ given by \eqref{eq:pi} monotonically converges.
\end{theorem}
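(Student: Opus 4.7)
The plan is to follow the classical policy-iteration template, with the only new ingredient being that we must check that the previous policy is itself feasible for the linear program that produces the next one. Assume inductively that $\pi_k$ is a feasible policy. Since $V_\D^{\pi_k}(x) = \langle \pi_k(\cdot|x), Q_\D^{\pi_k}(x,\cdot)\rangle$, the value-based constraint at state $x$ is exactly $V_\D^{\pi_k}(x) + \cev{V}_\D^{\pi_k}(x) - d(x) \leq d_0$, so $\pi_k$ lies in the feasible set of the LP \eqref{eq:pi}. Because $\pi_{k+1}(\cdot|x)$ is defined as the maximizer of $\langle \pi(\cdot|x), Q^{\pi_k}(x,\cdot)\rangle$ over that feasible set, we immediately obtain, for every $x \in \X$,
\begin{equation*}
\big\langle \pi_{k+1}(\cdot|x), Q^{\pi_k}(x,\cdot)\big\rangle \;\geq\; \big\langle \pi_k(\cdot|x), Q^{\pi_k}(x,\cdot)\big\rangle \;=\; V^{\pi_k}(x).
\end{equation*}

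Next I would convert this one-step advantage into a global improvement via the Bellman operator. Let $\T^\pi$ denote the standard Bellman evaluation operator, $(\T^\pi V)(x) = \sum_a \pi(a|x)\big[r(x,a) + \E_{x'\sim \P(\cdot|x,a)} V(x')\big]$. The inequality above reads $V^{\pi_k} \leq \T^{\pi_{k+1}} V^{\pi_k}$. Because $\T^{\pi_{k+1}}$ is monotone in its argument, iterating gives
\begin{equation*}
V^{\pi_k} \;\leq\; \T^{\pi_{k+1}} V^{\pi_k} \;\leq\; (\T^{\pi_{k+1}})^2 V^{\pi_k} \;\leq\; \cdots \;\leq\; \lim_{m\to\infty}(\T^{\pi_{k+1}})^m V^{\pi_k} \;=\; V^{\pi_{k+1}},
\end{equation*}
where the final identity is the fixed point property of $\T^{\pi_{k+1}}$. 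This establishes $V^{\pi_{n+1}}(x) \geq V^{\pi_n}(x)$ for all $x \in \X$.

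For the convergence statement, the sequence $(V^{\pi_n}(x))_{n \geq 0}$ is pointwise nondecreasing by the above and uniformly bounded above by $T \cdot \textsc{Rmax}$ under the finite-horizon bounded-reward assumption. The monotone convergence theorem, applied at each $x$, then yields that $V^{\pi_n}$ converges pointwise as $n \to \infty$.

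The main obstacle is the feasibility check in the first step: we need the induction hypothesis that $\pi_k$ satisfies the state-wise value-based constraint at every $x$, not only at $x_0$. This is precisely what Theorem~\ref{thm:consistent-fesibility} (together with the small-TV update assumption) delivers, so the improvement result should be stated and invoked as a companion to consistent feasibility. Everything else is a direct consequence of the monotonicity of $\T^{\pi_{k+1}}$ and boundedness of returns, and requires no new computations.
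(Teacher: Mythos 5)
Your proof is correct, and it is in fact more careful than the paper's. The paper disposes of this theorem in two sentences: ``$\pi_{n+1}$ and $\pi_n$ are both feasible solutions of the LP; since $\pi_{n+1}$ maximizes $V^\pi$ over all feasible solutions, the result follows.'' Taken literally this is imprecise --- the LP \eqref{eq:pi} maximizes the one-step objective $\langle \pi(\cdot|x), Q^{\pi_n}(x,\cdot)\rangle$, not $V^\pi$ itself, so one still needs the step you supply: the optimality of $\pi_{n+1}$ over a feasible set containing $\pi_n$ gives $V^{\pi_n} \leq \T^{\pi_{n+1}} V^{\pi_n}$, and monotonicity of the Bellman operator plus its fixed-point property then yield $V^{\pi_n} \leq V^{\pi_{n+1}}$. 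That is the classical policy-improvement completion, and your boundedness argument for convergence matches the paper's implicit claim. Your closing caveat is also well placed and worth keeping: for $\pi_n$ to be feasible for the LP one needs the \emph{state-wise} value-based constraint $V_\D^{\pi_n}(x) + \cev{V}_\D^{\pi_n}(x) - d(x) \leq d_0$ to hold at every $x$, which is stronger than what Theorem~\ref{thm:consistent-fesibility} as stated guarantees (feasibility at $x_0$ only, in the sense $V_\D^{\pi_n}(x_0)\leq d_0$); the paper simply asserts feasibility of $\pi_n$ without addressing this, so your proof makes explicit a hypothesis that both arguments genuinely require.
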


Proofs for Theorems \ref{thm:consistent-fesibility} and \ref{thm:policy-improvement} are given in  Appendix \ref{sec:pi-properties}. We note that the value-based constraints are in general a stronger constraint than the original CMDP formulation. Therefore, while Theorem \ref{thm:policy-improvement} guarantees convergence, the final policy which is obtained from the \eqref{eq:pi} procedure may be suboptimal when compared to the optimal feasible policy $\pi^*_\D$.
Finding the sub-optimality gap (if any) remains an interesting question for future work.

\section{Practical Implementation Considerations}
\label{sec:practical-implementaion}

\subsection{Discrete Action Space}
\label{sec:discrete-action-space}

In discrete action spaces, the updates \eqref{eq:pi} can be solved exactly as a Linear Programming problem. It is possible to approximate its analytical solution by casting it into the corresponding entropy-regularized counterpart \citep{neu2017unified, chow2018lyapunov}. The details of the closed form solution can be found in Appendix~\ref{app:discrete-softmax-solution}.

Furthermore, if we restrict the set of policies to be deterministic, then it is possible to have an in-graph solution as well. The procedure then closely resembles the Action Elimination Procedure \citep[Chapter~6]{puterman2014markov}, where non-optimal actions are identified as being those which violate the constraints.

\subsection{Extension to continuous control}
\label{sec:continuous-control}

For MDPs with only state-dependent costs, \citet{dalal2018safe} proposed the use of safety layers, a constraint projection approach, that enables action correction at each step. At any given state, an unconstrained action is selected and is passed to the safety layer, 
which projects the action to the nearest action (in Euclidean norm) satisfying the necessary constraints.
We extend this approach to stochastic policies to handle the corrections for the actions generated by stochastic policies. When the policy is parameterized with a Gaussian distribution, then the safety layer can still be used by projecting both the mean and standard-deviation vectors to the constraint-satisfying hyper-plane.
A proof of this claim is given in Appendix~\ref{app:safety-stochastic}. 
In most cases, the standard-deviation vector is kept fixed or independent of the state \citep{pytorchrl, baselines}, which allows us to formulate the problem as solving the following $\mathscr{L}_2$-projection of the mean of the Gaussian in Euclidean space. For $\mu_{\pi}(\cdot; \theta)$, at any given state $x \in \X$, the safety layer solves the following projection problem:
\begin{align*}
  &\argmin_{\mu} \left[ \frac{1}{2} \norm{\mu  - \mu_{\pi}(x)}^2 \right], \\
  &{\small \texttt{s.t.}} \quad Q_{\D}^{\pi}(x, \mu) + \cev{V}^{\pi}_{\D}(x)  - d(x) \leq d_0 .
\end{align*}

As previously shown, if the constraints have linear nature then an analytical solution exists~\cite{dalal2018safe, chow2019lyapunov}. In order to get a linearized version of the constraints (and simplify the projection), we can approximate the constraint with its first-order Taylor series at $\mu = \mu_{\pi}(x)$:
\begin{align}
    &\argmin_{\mu} \left[ \frac{1}{2} \norm{\mu  - \mu_{\pi}(x)}^2 \right], \label{eq:safety-layer} \\
  &{\small \texttt{s.t.}}\quad \cev{V}^{\pi}_{\D}(x)  - d(x) + Q^{\pi}_{\D}(x, \mu_{\pi}(x)) \nonumber \\ 
  &\quad + (\mu - \mu_{\pi}(x))^T (\nabla Q^{\pi}_{\D}(x, \mu)|_{\mu = \mu_{\pi}(x)}) \leq d_0 . \nonumber 
\end{align}

The above objective function is positive-definite and quadratic, and the constraints are linear. Though this problem can be solved by an in-graph QP solver, we derive an analytical solution:
\begin{prop}
    For any state $x \in \X$, let     $
        g_{\mu, \D}(x) = \nabla Q_{\D}^{\pi}(x, \mu)|_{\mu = \mu_{\pi}(x)}$ and
            $$
        \lambda^* (x) = \left( \frac{ - (d_0 + d(x) - \cev{V}^{\pi}_{\D}(x) - Q_{\D}^{\pi}(x, \mu_{\pi}(x)))}{ g_{\mu, \D}(x)^T g_{\mu, \D}(x) } \right)^{+}.
        $$
        Then, the solution to Equation~\eqref{eq:safety-layer} is given by:
    \begin{equation*}
        \mu^*(x) = \mu_{\pi}(x) -  \lambda^*(x) \cdot g_{\mu, D}(x) .
    \end{equation*}
\end{prop}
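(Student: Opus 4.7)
The plan is to solve this as a convex quadratic program with a single linear inequality constraint, where the KKT conditions are both necessary and sufficient. First I would normalize the notation: abbreviate $g := g_{\mu,\D}(x)$ and let
\[
c(x) := d_0 + d(x) - \cev{V}^{\pi}_{\D}(x) - Q_{\D}^{\pi}(x, \mu_{\pi}(x)),
\]
so that the constraint in \eqref{eq:safety-layer} rewrites cleanly as $(\mu - \mu_{\pi}(x))^{T} g \leq c(x)$. Since the objective $\tfrac{1}{2}\|\mu - \mu_{\pi}(x)\|^{2}$ is strongly convex and the feasible set is a non-empty closed half-space (Slater's condition trivially holds as soon as $g \neq 0$, and the $g = 0$ case is handled separately below), the projection is unique and characterized by the KKT system.

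Next I would write the Lagrangian
\[
L(\mu, \lambda) = \tfrac{1}{2}\|\mu - \mu_{\pi}(x)\|^{2} + \lambda\bigl((\mu - \mu_{\pi}(x))^{T} g - c(x)\bigr),
\]
with multiplier $\lambda \geq 0$, and extract the three KKT conditions: (i) stationarity, $\nabla_\mu L = (\mu - \mu_{\pi}(x)) + \lambda g = 0$, which immediately gives $\mu = \mu_{\pi}(x) - \lambda g$; (ii) primal feasibility, $(\mu - \mu_{\pi}(x))^{T} g \leq c(x)$; and (iii) complementary slackness, $\lambda \cdot ((\mu - \mu_{\pi}(x))^{T} g - c(x)) = 0$. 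Substituting the stationarity relation into the active-constraint equation yields $-\lambda\, g^{T} g = c(x)$, i.e.\ $\lambda = -c(x)/(g^{T} g)$ whenever the constraint binds.

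I would then do a short case analysis. If $c(x) \geq 0$, the unconstrained minimizer $\mu = \mu_{\pi}(x)$ is already feasible, so by uniqueness $\mu^{*}(x) = \mu_{\pi}(x)$ and the corresponding multiplier is $\lambda^{*}(x) = 0$. If $c(x) < 0$, feasibility forces the constraint to be active, giving $\lambda^{*}(x) = -c(x)/(g^{T} g) > 0$ and $\mu^{*}(x) = \mu_{\pi}(x) - \lambda^{*}(x)\, g$. Both branches are captured compactly by $\lambda^{*}(x) = \bigl(-c(x)/(g^{T} g)\bigr)^{+}$, which matches the stated formula once $c(x)$ is unfolded. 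I would finish by noting the degenerate case $g = 0$: if $c(x) \geq 0$, the constraint is vacuous and $\mu^{*}(x) = \mu_{\pi}(x)$ (consistent with $\lambda^{*}(x) = 0$ and the convention that $0/0$ is treated as $0$ under $(\cdot)^{+}$), while $c(x) < 0$ cannot occur here since the linearized constraint reduces to $c(x) \geq 0$ being required for feasibility.

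There is no real obstacle in this proof; the only bookkeeping care needed is matching the sign of the numerator in $\lambda^{*}(x)$ to the rearrangement of the linearized constraint, and observing that the $(\cdot)^{+}$ operation is precisely what merges the active and inactive KKT branches into a single closed-form expression.
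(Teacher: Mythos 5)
Your proof is correct and follows essentially the same route as the paper's: form the Lagrangian for the linearized projection problem, use stationarity to get $\mu = \mu_{\pi}(x) - \lambda g$, substitute into the active constraint to solve for $\lambda$, and use the $(\cdot)^{+}$ operator to merge the active and inactive branches. Your version is in fact slightly more careful than the paper's, since you make complementary slackness and the $g=0$ degenerate case explicit rather than appealing to them informally.
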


The derivation of the previous solution is given in Appendix~\ref{app:safety-analytical}.

\begin{algorithm}
  \caption{A2C with Safety Layer - for each actor thread}
    \label{alg:safety-layer-A2C}
    \small
    \begin{algorithmic}[1]
    \STATE \textbf{Input:} 
    $\pi(\cdot\ ; \theta), V(\cdot\ ; \phi) , Q_\D(\cdot,\cdot\  ; \theta_\D), \cev{V}_\D(\cdot \ ; \phi_\D), n$
    %
    
  \FOR{episode $e \in {1,  ..., M}$}
  
    \STATE Get initial state $\{ x_0 \}$ ; $t \leftarrow 1$

    \WHILE{$t < T$}
        \STATE $t_{start} \leftarrow t$
    
        \WHILE{$t < t_{start}+n$ or $t == T$}
            \STATE Select $a_t$ using sampling from the projected mean $\mu_t$ via the safety layer Eq.\eqref{eq:safety-layer}, execute $a_t$, observe $x_{t+1}$ and reward $r_t$ and cost $d_t$. 
            \STATE $t \leftarrow t+1$
        \ENDWHILE
        \STATE Calculate the targets for $x_{t+1}$ using the current estimates for bootstrap:
        \begin{align*}
            &R \leftarrow  \IEIf{t==T}{0}{V(x_{t+1}, a_{t+1}; \phi)}\\
            &R_{\D} \leftarrow  \IEIf{t==T}{0}{Q_{\D}(x_{t+1}, \mu_{t+1} ; \theta_\D)} \\
            &\cev{R}_{\D} \leftarrow  \IEIf{t==1}{0}{\cev{V}_{\D}(x_{t_{start}-1}; \phi_\D)}
        \end{align*}
        
        \FOR{$i \in \{t-1, \dots, t_{start}\}$}
            \STATE $R \leftarrow r_i + \gamma R$
            \STATE $R_\D \leftarrow d_i + \gamma R_\D$
            
            \STATE Accumulate the gradients w.r.t. $ \theta, \phi, \theta_\D$:
            \begin{align*}
                d\phi &\leftarrow d\phi + \partial (R - V(x_i \phi))^2 / \partial \phi \\ 
                d\theta_{\D} &\leftarrow d\theta_\D + \partial (R_\D - Q_\D(x_i, \mu_i ; \theta_\D))^2 / \partial \theta_\D
            \end{align*}
            
            \IF{the policy is in feasible space} 
                \STATE Update the policy:
                \begin{align*}
                    d\theta &\leftarrow d\theta + \nabla_{\theta} \log \pi(a_i \mid x_i; \theta)(R - V(x_i; \phi))
                \end{align*}
            \ELSE
                \STATE Use safeguard policy update to recover:
                \begin{align*}
                    d\theta &\leftarrow d\theta - \nabla_{\theta} \log \pi(a_i\mid x_i; \theta)(R_{\D}) 
                \end{align*}
            \ENDIF
        \ENDFOR
        
        \COMMENT{Update the backward estimator here}
        \FOR{$i \in \{t_{start}, \dots, t\}$}
            \STATE $\cev{R}_\D \leftarrow d_i + \gamma \cev{R}_\D$
            \STATE Accumulate the gradients w.r.t. $\phi_\D$:
            \begin{align*}
                d\phi_{\D} &\leftarrow d\phi_\D + \partial (\cev{R}_\D - \cev{V}_\D(x_i; \phi_\D))^2 / \partial \phi_\D
            \end{align*}
        \ENDFOR
        
        \STATE Do synchronous batch update with the accumulated gradients to update $\theta, \phi, \theta_\D, \phi_\D$. 
        using $d\theta, d\phi, d\theta_{\D}, d\phi_{\D}$.
    
    \ENDWHILE
    
    
  \ENDFOR

  \end{algorithmic}

\end{algorithm}

\section{Related Work}
\label{sec:related-work}

\textbf{Lagrangian-based methods:} Initially introduced in \citet{altman1999constrained}, more scalable versions of the Lagrangian based methods have been proposed over the years \citep{moldovan2012safe, tessler2018reward, chow2015risk}. The general form of the Lagrangian methods is to convert the problem to an unconstrained problem via Langrange multipliers. If the policy parameters are denoted by $\theta$, then Lagrangian formulation becomes:
\begin{align*}
    &\min_{\lambda \geq 0} \max_{\theta} L(\theta, \lambda)  \\
  	&\quad = \min_{\lambda \geq 0} \max_{\theta} \left[ V^{\pi_{\theta}}(x_0) - \lambda (V_{\D}^{\pi_{\theta}}(x_0) - d_0) \right],
\end{align*}
where $L$ is the Lagrangian and $\lambda$ is the Lagrange multiplier (penalty coefficient). The main problems of the Lagrangian methods are that the Lagrangian multiplier is either a hyper-parameter (without much intuition), or is solved on a lower time-scale. That makes the unconstrained RL problem a three time-scale problem, which makes it difficult to optimize in practice.\footnote{Classic Actor Critic is two time-scale \citep{konda2000actor}, and adding a learning schedule over the Lagrangian makes it three time scale.}
Another problem is that during the optimization, this procedure can violate the constraints. Ideally, we want a method that can respect the constraint throughout the training and not just at the final optimal policy.

\textbf{Lyapunov-based methods:}  In control theory, the stability of the system under a fixed policy is computed using Lyapunov functions \citep{khalil1996noninear}. A Lyapunov function is a type of scalar potential function that keeps track of the energy that a system continually dissipates.
Recently,  \citet{chow2018lyapunov, chow2019lyapunov} provide a method of constructing the Lyapunov functions to guarantee global safety of a behavior policy using a set of local linear constraints.  Their method requires the knowledge of closeness of the baseline policy and the optimal policy $\TV(\pi_0, \pi^*)$  to guarantee the theoretical claims. They further substitute the Lyapunov function with an approximate solution that requires solving a LP problem at every iteration. For the practical scalable versions, they use a heuristic constant Lyapunov function for all states that only depends on the initial state and the horizon. 
While our method also constructs state-wise constraints, there are two notable differences. Firstly, our theoretical results only require successive policies to be close (i.e. $\TV(\pi_{k},\pi_{k+1})$) rather than $\TV(\pi_0, \pi^*)$.
There are instances where this property is more desirable than the one in \citet{chow2019lyapunov}, for instance in the case when the initial baseline policy is random and there is no knowledge of the optimal policy.
Another difference is that our method does not require solving an LP at every update step to construct the constraint and as such the only approximation error that is introduced comes from the function approximation.

\textbf{Conservative Policy Improvement:} Constrained Policy Optimization (CPO) \citep{achiam2017constrained} extends the Trust-Region Policy Optimization (TRPO) \citep{schulman2015trust} algorithm to satisfy constraints during training as well as after convergence.  
The slow update assumption in Theorem.~\ \ref{thm:consistent-fesibility} can be integrated with the conservative policy improvement framework \cite{kakade2002approximately} to get a formulation similar to CPO. Solving such a formulation requires the knowledge of the exact Fisher Information Matrix (FIM) to claim any guarantees. For empirical purposes, as the FIM is not computationally tractable, an approximate estimate is used. Instead, our approach presents an alternate way to work with the analytical solutions of the state-dependent constraints. 

The CPO algorithm uses an approximation to the FIM which requires many steps of conjugate gradient descent ($n_{cg}$ steps) followed by a backtracking line-search procedure ($n_{ls}$ steps) for each iteration, so it is more expensive by $\mathcal{O}(n_{cg} + n_{ls})$ per update. 
Furthermore, accurately estimating the curvature requires a large number of samples in each batch \citep{wu2017scalable}.

\section{Experiments}
\label{sec:experiments}

We empirically validate our approach on RL benchmarks to measure the performance of the agent with respect to the accumulated return and cost during training. We use neural networks as a function approximator. For each benchmark, we compare our results with the Lyapunov-based approach \citep{chow2018lyapunov, chow2019lyapunov} and an unconstrained (unsafe) algorithm.
Even though our formulation is based on the undiscounted case, we use discounting with $\gamma=0.99$ for estimating the value functions in order to be consistent with the baselines.
We acknowledge that there is a gap between the stationary assumption (Assumption~\ref{assumption:stationarity}) we build our work on, and in practice how the algorithms are implemented. 
The initial starting policy in our experiments is a random policy, and due to that we adopt a safe-guard (or recovery) policy update in the same manner as \cite{achiam2017constrained, chow2019lyapunov}, where if the agent ends up being in an infeasible policy space, we recover by an update that purely minimizes the constraint value.

\subsection{Stochastic Grid World}
\label{sec:grid-world}

\begin{figure*}[htb]
    \begin{minipage}[t]{.22\linewidth}
        \centering
        \includegraphics[width=\textwidth]{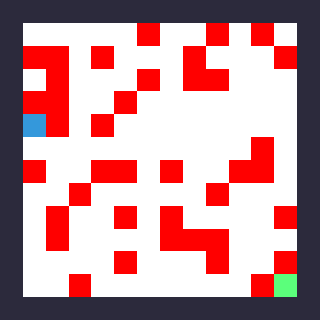}
        \subcaption{2D GridWorld}
        \label{fig:grid-example}
    \end{minipage}
    \hfill
    \begin{minipage}[t]{.35\linewidth}
        \centering
        \includegraphics[width=\textwidth]{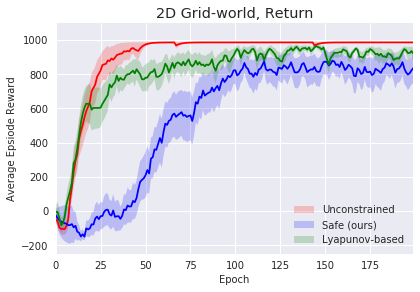}
        \subcaption{Returns}
        \label{fig:grid-returns}
    \end{minipage}
    \hfill
    \begin{minipage}[t]{.35\linewidth}
        \centering
        \includegraphics[width=\textwidth]{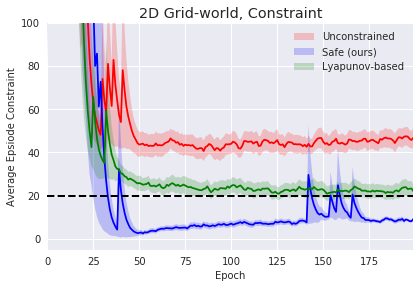}
        \subcaption{Constraints}
        \label{fig:grid-constraint}
    \end{minipage}  
    \caption{\small (a) Example of the 2D Gridworld environment. (b,c) Performance over the training for Unconstrained (red), Lyapunov-based (green), and our method (blue) all trained with n-step SARSA on GridWorld task over 20 random seeds. The x-axis is the number of episodes in thousands. The dotted black line denotes the constraint threshold $d_0$. The bold line represents the mean  and the shaded region denotes 80\% confidence-intervals.}
    \label{fig:grid-combined}
\end{figure*}

Motivated by the safety in navigation tasks, we first consider a stochastic 2D grid world  \citep{leike2017ai, chow2018lyapunov}. The agent (green cell in Fig.~\ref{fig:grid-example}) starts in the bottom-right corner, the safe region, and the objective is to move to the goal on the other side of the grid (blue cell). The agent can only move in the adjoining cells in the cardinal directions. It gets a reward of $+1000$ on reaching the goal, and a penalty of $-1$ at every timestep. Thus, the task is to reach the goal in the shortest amount of time. There are a number of pits in the terrain (red cells) that represent the safety constraint and the agent gets a cost of $10$ on passing through any pit cell. Occasionally, with probability $p = 0.05$, a random action will be executed instead of the one selected by the agent.  Thus, the task is to reach to the goal in the shortest amount of time, while passing through the red grids at most $d_0/10$ times. The size of the grid is $12\times12$ cells, and the pits are randomly generated for each grid with probability $\rho = 0.3$. The agent starts at $(12, 12)$ and the goal is selected uniformly on $(\alpha, 0)$, where $\alpha \sim U(0,12)$. The threshold $d_0 = 20$ implies the agent can pass at most two pits. The maximum horizon is $200$ steps, after which the episode terminates. 

We use the action elimination procedure described in Sec~\ref{sec:discrete-action-space} in combination with $n$-step SARSA \citep{sarsa-ref, peng1994incremental} using neural networks and multiple synchronous agents as in \cite{mnih2016asynchronous}. We use $\epsilon$-greedy exploration. The results are shown in Fig.~\ref{fig:grid-combined}. More experimental details can be found in Appendix~\ref{app:grid-details}.  We observe that the agent is able to respect the safety constraints more adequately than the Lyapunov-based method, albeit at the expense of some decrease in return, which is the expected trade-off for satisfying the constraints.

\subsection{MuJoCo Benchmarks}
\label{sec:mujoco}
Based on the safety experiments in \citet{achiam2017constrained, chow2019lyapunov}, we design three simulated robot locomotion continuous control tasks using the MuJoCo simulator \citep{todorov2012mujoco} and OpenAI Gym \citep{brockman2016openai}:
(1) \textbf{Point-Gather}: A point-mass agent ($S \subseteq \Real^{9}, A \subseteq \Real^{2}$) is rewarded for collecting the green apples and constrained to avoid the red bombs; (2) \textbf{Safe-Cheetah}: A bi-pedal agent ($S \subseteq \Real^{18}, A \subseteq \Real^{6}$) is rewarded for running at high speed, but at the same time constrained by a speed limit; (3) \textbf{Point-Circle}: The point-mass agent ($S \subseteq \Real^{9}, A \subseteq \Real^{2}$) is rewarded for running along the circumference of a circle in counter-clockwise direction, but is constrained to stay within a safe region smaller than the radius of the circle.

We integrate our method on top of the A2C algorithms \citep{mnih2016asynchronous} and PPO \citep{schulman2017proximal},  using the procedure described in Section~\ref{sec:continuous-control}. More details about the tasks and network architecture can be found in the Appendix \ref{app:mujoco-details}. Algorithmic details can be found in~Algorithm \ref{alg:safety-layer-A2C} and in Appendix~\ref{app:alg}. 
We discuss an alternate update for implementing the algorithms using target networks in Appendix~\ref{app:target-updates}.
The results with A2C are shown in Fig.~\ref{fig:a2c-results} and the results with PPO are shown in Fig.~\ref{fig:fig:ppo-results}.  We observe that our Safe method is able to respect the safety constraint throughout most of the learning, and with much greater degree of compliance than the Lyapunov-based method, especially when combined with A2C.  The one case where the Safe method fails to respect the constraint is in Point-Circle with PPO (Fig.~\ref{fig:fig:ppo-results}(c)). Upon further examination, we note that the training in this scenario has one of two outcomes: some runs end with the learner in an infeasible set of states from which it cannot recover; other runs end in a good policy that respects the constraint.  
We discuss possible solutions to overcome this in the following section.

\begin{figure*}
   \centering
   \small
\begin{tabular}{ccc}
\multicolumn{3}{c}{Returns} \\
\includegraphics[width=0.3\textwidth]{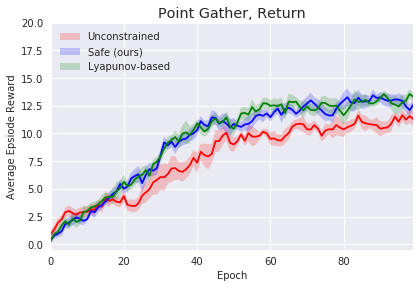}&
\includegraphics[width=0.3\textwidth]{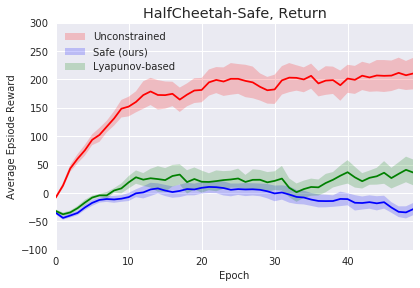}&
\includegraphics[width=0.3\textwidth]{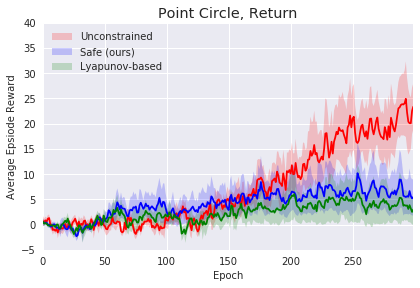}\\
\multicolumn{3}{c}{Constraints} \\
\includegraphics[width=0.3\textwidth]{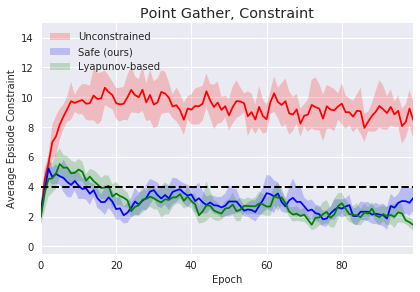}&
\includegraphics[width=0.3\textwidth]{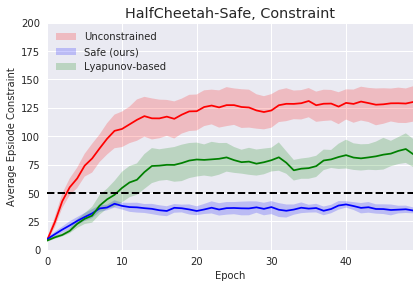}&
\includegraphics[width=0.3\textwidth]{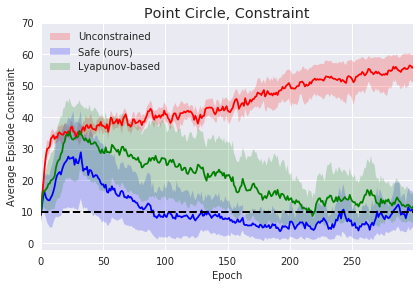}\\
(a) Point-Gather&
(b) Safe-Cheetah&
(c) Point-Circle\\
\end{tabular}
    \caption{\small A2C Performance over the training for Unconstrained (red), Lyapunov-based (green), and our method (blue) all trained with A2C on MuJoCo tasks over 10 random seeds. The x-axis is the number of episodes in thousands. The dotted black line denotes $d_0$. The bold line represents the mean, and the shaded region denotes the 80\% confidence-intervals.
    }
    \label{fig:a2c-results}
\end{figure*}

\begin{figure*}
   \centering
   \small
\begin{tabular}{ccc}
\multicolumn{3}{c}{Returns} \\
\includegraphics[width=0.3\textwidth]{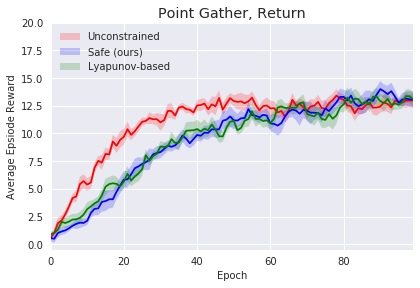}&
\includegraphics[width=0.3\textwidth]{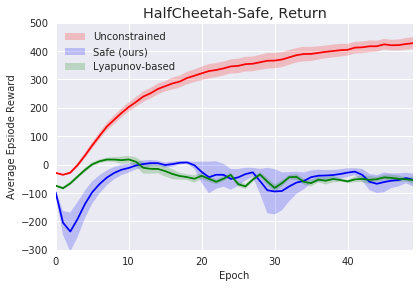}&
\includegraphics[width=0.3\textwidth]{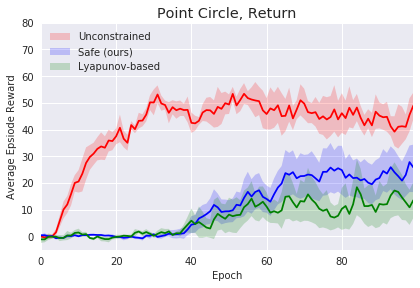}\\
\multicolumn{3}{c}{Constraints} \\
\includegraphics[width=0.3\textwidth]{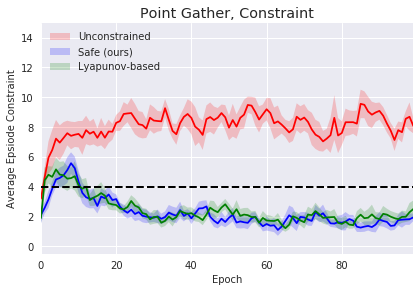}&
\includegraphics[width=0.3\textwidth]{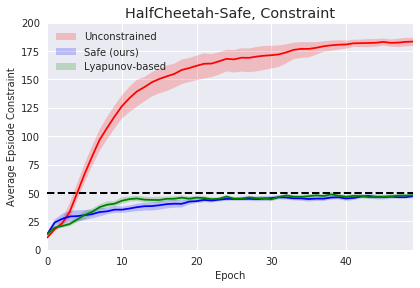}&
\includegraphics[width=0.3\textwidth]{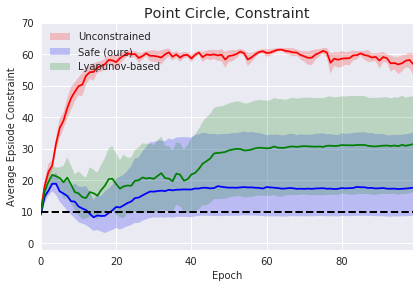}\\
(a) Point-Gather&
(b) Safe-Cheetah&
(c) Point-Circle\\
\end{tabular}
    \caption{\small PPO Performance over the training for Unconstrained (red), Lyapunov-based (green), and our method (blue) all trained with PPO on MuJoCo tasks over 10 random seeds. The x-axis is the number of episodes in thousands, and y-axis denotes the undiscounted accumulated returns. The dotted black line denotes $d_0$. The bold line represents the mean, and the shaded region denotes the 80\% confidence-intervals.
    }
    \label{fig:fig:ppo-results} 
\end{figure*}

\section{Discussion}
\label{sec:discussion}
We present a method for solving constrained MDPs that respects trajectory-level constraints by converting them into state dependent value-based constraints, and show how the method can be used to handle safety limitations in both discrete and continuous spaces.  
The main advantage of our approach is that the optimization problem is more easily solved with value-based constraints, while providing similar guarantees and requiring less approximations.
Using backward value function allows us to satisfy the constraint in expectation and makes the system Markovian, which in turn allows us to derive analytical solutions to the state-dependent local constraints. 
This also enables us to use the Temporal Difference based bootstrapping techniques that allow us to fit the final algorithm in the Actor-Critic family of methods.

The empirical results presented show that our approach is able to solve the tasks with good performance while maintaining safety throughout training.  
It is important to note that there is a fundamental trade-off between exploration and safety.  It is impossible to be 100\% safe without some knowledge; in cases where that knowledge is not provided a priori, it must be acquired through exploration.  
We see this in some of our results (Gridworld, Point-Circle) where our safe policy goes above the constraint in the very early phases of training (all our experiments started from a random policy).
We note that the other methods also suffer from this shortcoming.
An open question is how to provide initial conditions or a priori knowledge to avoid this burn-in phase. 
Another complementary strategy to explore is to design better recovery methods to prevent an agent to get stuck in an unsafe or infeasible policy space. 
On the theoretical side, an important next step would be to quantify the sub-optimality induced by the constraints imposed in our safe policy iteration method.


\section*{Acknowledgements}

The authors would like to thank NSERC (Natural Sciences and Engineering Research Council) for funding this research. 
We are thankful to Ahmed Touati, Joshua Romoff, Maxime Wabartha and Emmanuel Bengio for many helpful discussions about the work, and the anonymous reviewers for providing constructive feedback.


\bibliography{bvfs}
\bibliographystyle{icml2020}

\clearpage
\onecolumn
\appendix

\section{Reproducibility Checklist}
We follow the reproducibility checklist~\citep{rep-checklist} and point to relevant sections explaining them here.
For all algorithms presented, check if you include:
\begin{itemize}
    \item \textbf{A clear description of the algorithm.} The algorithms are explained in Sec.~\ref{app:alg}. Any additional details for Discrete methods are provided in Sec.~\ref{sec:discrete-action-space}, and for continuous Sec.~\ref{sec:continuous-control}.

    \item \textbf{An analysis of the complexity (time, space, sample size) of the algorithm.} 
    As we build on the same methodology as the baseline safe-method (Lyapunov), the complexity is same as the baseline, but is higher than the unconstrained versions. In terms of computation time (for Deep-RL experiments) the newly proposed algorithms are almost identical to the safe baselines due to its parallelizable nature. We do not make any claims about the sample complexity. 
    
    \item \textbf{A link to a downloadable source code, including all dependencies.} 
    The code can be found at \url{https://github.com/hercky/cmdps_via_bvf}.

\end{itemize}
For any theoretical claim, check if you include:
\begin{itemize}
    \item \textbf{A statement of the result.} See the main paper for all the claims we make. Additional details are provided in the Appendix.
    \item \textbf{A clear explanation of any assumptions.} See the main paper for all the assumptions. 
    \item \textbf{A complete proof of the claim.} See the main paper. The cross-references to the proofs in the Appendix have been included in the main paper.
\end{itemize}

For all figures and tables that present empirical results, check if you include:
\begin{itemize}
    \item \textbf{A complete description of the data collection process, including sample size.} 
    For the base agent we standard benchmarks provided in OpenAI Gym~\citep{brockman2016openai}, and rllab~\citep{duan2016benchmarking}. We use the code from \citet{achiam2017constrained} for building the Point-Circle and Point-Gather environments.
    
    \item \textbf{A link to downloadable version of the dataset or simulation environment.} 
    See: \href{https://github.com/openai/gym}{github.com/openai/gym} for OpenAI Gym benchmarks, \href{https://github.com/jachiam/cpo}{github.com/jachiam/cpo} for rllab based Circle and Gather environments.
    
    \item \textbf{An explanation of how samples were allocated for training / validation / testing.} We do not use a split as we run multiple runs over random seeds to examine the optimization performance. 
    
    \item \textbf{An explanation of any data that were excluded.} NA
    
    \item \textbf{The range of hyper-parameters considered, method to select the best hyper-parameter configuration, and specification of all hyper-parameters used to generate results.} 
    The default hyper-parameters for the MuJoCo baselines are taken from \citet{pytorchrl}.  The ranges and parameters for Grid experiments are described in Sec.~\ref{app:grid-details}, and for MuJoCo are described in Sec.~\ref{app:mujoco-details}.
    \item \textbf{The exact number of evaluation runs.}
    The number of evaluation runs is mentioned in the caption corresponding to each result.
    
    \item \textbf{A description of how experiments were run.} See Experiments Sec.~\ref{sec:experiments} in the main paper and in Appendix Sec.~\ref{app:grid-details} and Sec.~\ref{app:mujoco-details}.
    
    \item \textbf{A clear definition of the specific measure or statistics used to report results.} Un-discounted return and cost using the current policy over the horizon are plotted after every 1000 episodes are plotted. We use a linear-filter with $0.7$ weight for smoothing. We use the smoothing algorithm provided by TensorBoard (\url{https://github.com/tensorflow/tensorboard}).
    
    \item \textbf{Clearly defined error bars.} Standard error used in all cases.
    
    \item \textbf{A description of results with central tendency (e.g. mean) and variation (e.g. stddev).} The bold lines in the figure represent the mean, and the shaded region denotes the $80\%$ confidence interval. 
    
    \item \textbf{A description of the computing infrastructure used.} We distribute all runs across 10 CPU nodes (Intel(R) Xeon(R) CPU E5-2650 v4) and 1 GPU (GP 100) per run for experiments.  
\end{itemize}

\section{Backward Value Functions}
\label{app:reverse-vf}

We have the following result from Proposition 1 from \citet{morimura2010}. We give the proof too for the sake of completeness. 

\begin{prop}
\label{prop:back-dist}
    Let the forward Markov chain $\M(\pi)$ be irreducible and ergodic, i.e., has a stationary distribution. Then the associated backward Markov chain $\cev{\B}(\pi)$ is also ergodic and has the same unique stationary distribution as $\M(\pi)$:
    \begin{align}
        \eta^{\pi}(x) &= \cev{\eta}^{\pi}(x), \tag{$\forall x \in \X$}
    \end{align}
    where $\eta^{\pi}(x)$ and $\cev{\eta}^{\pi}(x)$ are the stationary distributions of $\M(\pi)$ and $\cev{\B}(\pi)$.

\end{prop}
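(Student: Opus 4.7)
The plan is to work directly from the explicit time-reversal formula in Equation~\eqref{eq:rev-posterior}. Summing over $a_{t-1}$ yields the marginal backward transition
\[
\cev{\P}\vphantom{\P}^{\pi}(x_{t-1}\mid x_t) \;=\; \frac{\P^{\pi}(x_t\mid x_{t-1})\,\eta^{\pi}(x_{t-1})}{\eta^{\pi}(x_t)},
\]
which is well-defined because ergodicity of the forward chain on a finite state space (Assumption~\ref{assumption:stationarity}) guarantees $\eta^{\pi}(x) > 0$ for every $x \in \X$. This identity is the workhorse of the whole proof.

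To verify that $\eta^{\pi}$ is stationary for $\cev{\B}(\pi)$, I would substitute the candidate into the backward analogue of Equation~\eqref{eq:eta} and carry out a one-line calculation: $\sum_{x_t} \cev{\P}\vphantom{\P}^{\pi}(x_{t-1}\mid x_t)\,\eta^{\pi}(x_t) = \eta^{\pi}(x_{t-1}) \sum_{x_t} \P^{\pi}(x_t\mid x_{t-1}) = \eta^{\pi}(x_{t-1})$, using that $\eta^{\pi}(x_t)$ cancels and that the forward transition rows sum to $1$. This shows $\eta^{\pi}$ is \emph{a} stationary distribution of $\cev{\B}(\pi)$; uniqueness will follow once ergodicity of the backward chain is established.

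For ergodicity of $\cev{\B}(\pi)$, the key observation is that $\cev{\P}\vphantom{\P}^{\pi}(x_{t-1}\mid x_t) > 0$ if and only if $\P^{\pi}(x_t\mid x_{t-1}) > 0$, since $\eta^{\pi}$ is strictly positive. Thus a directed edge exists in the backward transition graph exactly when the reversed edge exists in the forward transition graph. Irreducibility of $\M(\pi)$ therefore transfers to $\cev{\B}(\pi)$: any path $x \to y$ in the forward chain corresponds to a reversed path $y \to x$ in the backward chain, so every pair of states communicates in $\cev{\B}(\pi)$. For aperiodicity, observe that every cycle $x_0 \to x_1 \to \cdots \to x_n = x_0$ in the forward chain of length $n$ induces the reversed cycle $x_0 \to x_{n-1} \to \cdots \to x_0$ of the same length $n$ in the backward chain, so the gcd of cycle lengths at any state is preserved and aperiodicity carries over. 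Standard finite-state Markov chain theory then gives uniqueness of the stationary distribution, so $\cev{\eta}^{\pi} = \eta^{\pi}$.

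I do not anticipate a serious obstacle: the proof is essentially a bookkeeping exercise built on the fact that $\eta^{\pi}$ is everywhere positive. The only point that requires a bit of care is making the irreducibility/aperiodicity transfer precise, and in particular arguing cleanly that positivity of the stationary distribution is what allows the forward and backward transition graphs to share the same (reversed) edge set.
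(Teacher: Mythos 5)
Your proposal is correct and follows essentially the same route as the paper: sum Equation~\eqref{eq:rev-posterior} over actions to get the detailed-balance-like identity $\cev{\P}\vphantom{\P}^{\pi}(x_{t-1}\mid x_t)\,\eta^{\pi}(x_t)=\P^{\pi}(x_t\mid x_{t-1})\,\eta^{\pi}(x_{t-1})$, then sum over $x_t$ to verify that $\eta^{\pi}$ is stationary for the backward chain. The one difference is that you also spell out why the backward chain is itself irreducible and aperiodic (via positivity of $\eta^{\pi}$ and the reversed edge/cycle correspondence), a point the paper asserts without proof; this is a genuine and welcome addition since uniqueness of $\cev{\eta}^{\pi}$ is what licenses the conclusion $\cev{\eta}^{\pi}=\eta^{\pi}$.
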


\begin{proof}
Multiply both sides of Eq.~\eqref{eq:rev-posterior} by $\eta^{\pi}(x_t)$ and sum over all actions $a_{t-1} \in \A$ we obtain detailed balance like equations (with respect to time): 
\begin{align*}
    \cev{\P}\vphantom{\P}^{\pi}(x_{t-1} | x_t) \eta^{\pi}(x_t) &= \P^{\pi}(x_{t}, a_{t-1}|x_{t-1}) \eta^{\pi}(x_{t-1})  \tag{$\forall x_{t-1} \in \X, x_t \in \X $}.
    \intertext{Sum over all possible $x_t$ we have:}
    \sum_{x_t \in \X} \cev{\P}\vphantom{\P}^{\pi}(x_{t-1} | x_t) \eta^{\pi}(x_t) &=  \eta^{\pi}(x_{t-1}).
\end{align*}
The above equation indicates that $\cev{\B}(\pi)$ has same stationary distribution as $\M(\pi)$. In the matrix form the above equation can be written as $\eta \cev{P}^{\pi} = \eta$, that implies that $\eta$ is stationary distribution with $\cev{P}^{\pi}$ transition matrix. 
\end{proof}

\subsection{Relation between forward and backward markov chains and backward Value Functions}
\label{app:revf-proof}

\begin{proof} 

We use the technique of Proposition 2 of \citet{morimura2010} to prove this.  Using the Markov property and then substituting Eq. \eqref{eq:rev-posterior} for each term we have:
\begin{align*}
    \cev{\P}\vphantom{\P}^{\pi}(x_{t-1}, a_{t-1}, \dots, x_{t-K}, a_{t-K} | x_t) &= \cev{\P}\vphantom{\P}^{\pi}(x_{t-1}, a_{t-1} | x_t) \dots \cev{\P}\vphantom{\P}^{\pi}(x_{t-K}, a_{t-K} | x_{t-K+1}), \\ 
    &= \frac{\P^{\pi}(x_t, a_{t-1}|x_{t-1}) \dots \P^{\pi}(x_{t-K+1}, a_{t-K}|x_{t-K})  \eta^{\pi}(x_{t-K})}{ \eta^{\pi}(x_t) }, \\ 
    &\propto \P^{\pi}(x_t, a_{t-1}|x_{t-1}) \dots \P^{\pi}(x_{t-K+1}, a_{t-K}|x_{t-K})  \eta^{\pi}(x_{t-K}) .
\end{align*}

This proves the proposition for finite $K$. Using the Prop.~\ref{prop:back-dist}, $K \rightarrow \infty$ case is proven too:
\begin{align*}
    \lim_{K \rightarrow \infty} \E_{\cev{\B}(\pi)}\left[ \sum_{k=0}^{K}  d(x_{t-k})| x_t \right]
    &= \lim_{K \rightarrow \infty}  \E_{\M(\pi)}\left[\sum_{k=0}^{K} d(x_{t-k}) | x_t, \eta^{\pi}(x_{t-K}) \right], \\
    &= \sum_{x \in \X} \sum_{a \in \A} \pi(a|x) \eta^{\pi}(x) d(x).
\end{align*}

\end{proof}

\subsection{TD for BVF}
\label{app:td-bvf}

\begin{proof}

We use the same technique from Stochastic Shortest Path dynamic programming \citep[Vol 2, Proposition 1.1]{bertsekas1995dynamic} to prove the above proposition. The general outline of the proof is given below, for more details we refer the reader to the textbook.

\begin{align*}
    \intertext{We have,}
    \cev{\T}^{\pi} \cev{V} &=  d +  \cev{P}^{\pi} \cev{V}. \tag{Eq.~\eqref{eq:bvf-operator} in matrix notation}
    \intertext{Using induction argument, we have for all $\cev{V} \in \mathbb{R}^{n}$ and $k \geq 1$, we have:}
    {\left(\cev{\T}^{\pi}\right)}^{k} \cev{V} &= {\left(\cev{P}^{\pi}\right)}^{k} \cev{V} + \sum_{m=0}^{k-1} {\left(\cev{P}^{\pi}\right)}^{m} d ,\\ 
    \intertext{Taking the limit, and using the result, $\lim_{k \rightarrow \infty} {\left(\cev{P}^{\pi}\right)}^{k} \cev{V} = 0$, regarding proper policies from \citet[Vol 2, Equation 1.2]{bertsekas1995dynamic}, we have:}
    \lim_{k \rightarrow \infty} {\left(\cev{\T}^{\pi}\right)}^{k} \cev{V} &= \lim_{k \rightarrow \infty} \sum_{m=0}^{k-1} {\left(\cev{P}^{\pi}\right)}^{m} d  = \cev{V}^{\pi} ,\\  
    \intertext{Also we have by definition:}
    {\left(\cev{T}^{\pi}\right)}^{k+1} \cev{V} &= d + \cev{P}^{\pi}  {\left(\cev{T}^{\pi}\right)}^{k} \cev{V} ,\\
    \intertext{and by taking the limit $k \rightarrow \infty$, we have:}
    \cev{V}^{\pi}  &= d + \cev{P}^{\pi} \cev{V}^{\pi},  \\
    \intertext{which is equivalent to,}
    \cev{V}^{\pi}   &= \cev{\T}^{\pi} \cev{V}^{\pi}  .
\end{align*}

To show uniqueness, note that if $\cev{V} = \cev{\T}^{\pi} \cev{V}$, then $\cev{V} = {\left(\cev{\T}^{\pi}\right)}^{k} \cev{V}$ for all $k$ and letting $k \rightarrow \infty$ we get $\cev{V} = \cev{V}^{\pi}$.

\end{proof}

\subsection{Value-based constraint}\label{app:constraint-lemma}
\begin{prop}
We have that:
\begin{align}
\E_{\M(\pi)}\left[\sum_{k=0}^T d(x_k) \mid x_0\right]
&\leq \E_{x_t \sim \eta^\pi(\cdot)} \left[\cev{V}^\pi_{\D}(x_t) + V^\pi_{\D}(x_t) - d(x_t) \right] .
\end{align}
\end{prop}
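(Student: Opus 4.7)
The plan is to exploit the pointwise identity
$$\sum_{k=0}^T d(x_k) = \sum_{k=0}^t d(x_k) + \sum_{k=t}^T d(x_k) - d(x_t),$$
which holds for any $t \in [0,T]$ on every trajectory, and then to identify the two partial sums with (upper bounds of) the backward and forward value functions. Taking $\E_{\M(\pi)}[\cdot \mid x_0]$ of this identity gives the analogue of Equation~\eqref{eq:constraint}, so the whole task reduces to re-expressing each of the two partial expectations in terms of $\cev{V}^\pi_\D$ and $V^\pi_\D$ evaluated at a state drawn from $\eta^\pi$.

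For the forward piece, I would apply the Markov property and time-homogeneity: conditioning on $x_t$, the shifted sum $\E_{\M(\pi)}[\sum_{k=t}^T d(x_k) \mid x_t]$ is a partial-horizon sum of length $T-t$, hence upper-bounded by the full-horizon quantity $V^\pi_\D(x_t)$ since $d \geq 0$. Taking an outer expectation over $x_t$ (using stationarity to align the marginal distribution of $x_t$ with $\eta^\pi$) yields
$$\E_{\M(\pi)}\!\left[\sum_{k=t}^T d(x_k) \mid x_0\right] \leq \E_{x_t \sim \eta^\pi}\!\left[V^\pi_\D(x_t)\right].$$

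For the backward piece, I would invoke Proposition~\ref{prop:relation-fwd-back} with $K = t$, which states that the forward-chain expectation (with $x_0$ drawn from $\eta^\pi$) and the backward-chain expectation agree conditional on $x_t$. Since $d \geq 0$, this partial backward sum is bounded above by the full backward value $\cev{V}^\pi_\D(x_t)$; averaging over $x_t \sim \eta^\pi$ identifies the past term with $\E_{x_t \sim \eta^\pi}[\cev{V}^\pi_\D(x_t)]$. Subtracting the shared endpoint contribution $\E_{x_t \sim \eta^\pi}[d(x_t)]$ yields the claimed bound.

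The main obstacle will be the bookkeeping required to pass from the fixed initial condition $x_0$ on the left-hand side to the outer expectation over $\eta^\pi$ on the right-hand side. This step relies essentially on Assumption~\ref{assumption:stationarity}: ergodicity and the sampling-interchange result of Proposition~\ref{prop:relation-fwd-back} are what allow us to replace conditional trajectory expectations by stationary averages without losing the inequality direction. The two places where the identity becomes a strict inequality are (i) the horizon-truncation $\E[\sum_{k=t}^T d(x_k) \mid x_t] \leq V^\pi_\D(x_t)$ for the forward part, and (ii) the extension of a partial backward sum of length $t$ to the full backward value $\cev{V}^\pi_\D(x_t)$; both steps use only $d \geq 0$ and are tight when $t$ matches the respective horizon.
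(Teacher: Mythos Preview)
Your overall strategy matches the paper's: split the trajectory sum at an intermediate time $t$ as in Equation~\eqref{eq:constraint}, bound the forward tail by $V^\pi_\D(x_t)$ via horizon extension (using $d\ge 0$), identify the backward prefix with $\cev{V}^\pi_\D(x_t)$ via the forward/backward chain correspondence, and pass to stationarity for the marginal of $x_t$.

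There is one genuine gap in your backward step. Proposition~\ref{prop:relation-fwd-back} as stated conditions on $x_t$ and requires the \emph{earliest} state $x_{t-K}$ to be drawn from $\eta^\pi$; it does not directly handle the situation here, where the left-hand side is conditioned on a \emph{fixed} $x_0$ and you want $x_t\sim\eta^\pi$ on the right. What you need is the converse direction. The paper obtains it by writing out the trajectory probability and telescoping:
\[
\P^\pi(x_1,\dots,x_t\mid x_0)=\cev{\P}\vphantom{\P}^\pi(x_0,\dots,x_{t-1}\mid x_t)\,\frac{\eta^\pi(x_t)}{\eta^\pi(x_0)},
\]
which shows that the forward expectation from fixed $x_0$ equals the backward expectation with $x_t\sim\eta^\pi$ (up to the constant factor $1/\eta^\pi(x_0)$, which the paper then absorbs). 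Once this converse is in hand, the paper also replaces the finite backward sum by $\cev{V}^\pi_\D(x_t)$ \emph{as an equality}, arguing that $x_0$ is terminal for the backward chain; your inequality version (extend the partial backward sum to the full horizon using $d\ge 0$) is a safe alternative. You correctly flagged this fixed-$x_0$-to-$\eta^\pi$ passage as ``the main obstacle,'' but the concrete mechanism you cited (Proposition~\ref{prop:relation-fwd-back}) points the wrong way; the telescoping identity above is the missing ingredient.
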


\begin{proof}
We show that 
$\E\left[\sum_{k=t}^{T} d(x_k) \mid x_0\right] \leq \E_{x_t \sim \delta_{x_0}(P^{\pi})^t} \left[\vphantom{\sum_{k=t}^{T}}V^\pi_{\D}(x_t)\right]$ and $\E\left[\sum_{k=0}^{t} d(x_{k}) \mid x_0 \right] \leq \E_{x_t \sim \eta(\cdot)}\left[\cev{V}^\pi_{\D}(x_t) \right]$, where $\delta_{x_0}$ is a Dirac distribution at $x_0$.
The first one follows since adding more steps to the trajectory (from $T-t$ steps to $T$) can only increase the expected total cost. In other words:  $\E\left[\sum_{k=t}^{T} d(x_{k}) \mid x_0, \pi \right] = \delta_{x_0}(P^{\pi})^t \left(\sum_{k=t}^T (P^{\pi})^k\right) d \leq \delta_{x_0}(P^{\pi})^t \left(\sum_{k=t}^{T+t} (P^{\pi})^k\right) d = \E_{x_t \sim \delta_{x_0}(P^{\pi})^t} \left[\vphantom{\sum_{k=t}^{T}}V^\pi_{\D}(x_t)\right] $. We replace $\delta_{x_0}(P^\pi)^t$ by $\eta^\pi$ by taking a limit over $t$ (by Assumption \ref{assumption:stationarity}). \\ For the backwards case, we use the converse result to Proposition 3.1. Note that:
\begin{align*}
\P^\pi(x_1,x_2,\dots, x_t| x_0) &= \P^\pi(x_1|x_0) \P^\pi(x_2|x_1) \cdots \P^\pi(x_{t} | x_{t-1}) \\
&= \cev{\P}^\pi(x_0|x_1)\frac{\eta(x_1)}{\eta(x_0)} \cev{\P}^\pi(x_1|x_2) \frac{\eta(x_2)}{\eta(x_1)}\cdots \cev{\P}^\pi(x_{t-1}|x_t) \frac{\eta(x_t)}{\eta(x_{t-1})} \\
&= \cev{\P}^\pi(x_0|x_1) \cev{\P}^\pi(x_1|x_2)\cdots \cev{\P}^\pi(x_{t-1}|x_t) \frac{\eta(x_t)}{\eta(x_0)} \\
&= \cev{\P}^\pi(x_0,x_1,...,x_{t-1}|x_t) \frac{\eta(x_t)}{\eta(x_0)} \\
\end{align*}

This means that the forward probability for the trajectory $(x_0,x_1,...,x_t)$ under policy $\pi$ is equivalent the probability of sampling a state $x_t$ from the stationary distribution and obtaining the trajectory $(x_t,x_{t-1},...,x_0)$ via the backwards chain. In other words:
$$
\E_{\M(\pi)}\left[\sum_{k=0}^{t} d(x_{k}) \mid x_0 \right] = \E_{\B(\pi), x_{t} \sim \eta(\cdot)}\left[ \sum_{k=0}^{t} d(x_t) \mid x_0  \right].
$$
Since $x_0$ is a terminal state for the Backwards MDP we can replace the summation in the expectation with the backwards value function $\cev{V}^\pi_\D(x_t)$. 
\end{proof}

\section{Properties of the policy iteration \eqref{eq:pi}}\label{sec:pi-properties}
  \begin{theorem}
  Let $\sigma(x) \coloneqq \TV(\pi_{k+1}(\cdot|x),\pi_k(\cdot|x)) = (1/2)\sum_a | \pi_{k+1}(a|x)-\pi_k(a|x)|$ denote the total variation between policies $\pi_k(\cdot|x)$ and $\pi_{k+1}(\cdot|x)$. If the policies are updated sufficiently slowly and $\pi_k$ is feasible, then so is $\pi_{k+1}$. More specifically:
  \begin{itemize}
        \item[\textbf{(I)}] If $\pi_k$ is feasible at $x_0$ and $\sigma(x) \leq \frac{d_0-V_{\D}^{\pi_k}(x_0)}{2T^2D_{\MAX}} \ \forall x$ then $\pi_{k+1}$ is feasible at $x_0$.
      \item[\textbf{(II)}] If $\pi_k$ is feasible everywhere (i.e. $V^{\pi_k}_{\D}(x) \leq d_0 \ \forall x$) and $\sigma(x) \leq \frac{d_0-V_{\D}^{\pi_k}(x)}{2T \max_{x'}\{d_0-\cev{V}_{\D}^{\pi_k}(x')-d(x')\}} \ \forall x$ then $\pi_{k+1}$ is feasible everywhere.
  \end{itemize}
  We note that the second case allows the policies to be updated in a larger neighborhood but requires $\pi_k$ to be feasible everywhere. By contrast the first item updates policies in a smaller neighbourhood but only requires feasibility at the starting state.  

  \end{theorem}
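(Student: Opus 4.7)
My plan is to use the Kakade--Langford performance difference lemma to quantify how much the cost value function can shift when moving from $\pi_k$ to $\pi_{k+1}$, then invoke the slow-update condition to keep that shift within the safety margin $d_0 - V_\D^{\pi_k}(x_0)$. Interestingly, I do not expect the (SPI) constraint itself to play a role in this first-part argument; only total-variation closeness of successive policies, together with a crude uniform bound on $Q_\D^{\pi_k}$, will be needed.

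First, I would invoke the finite-horizon performance difference identity for the cost value function. By telescoping through the Bellman equation for $Q_\D^{\pi_k}$, one obtains
$$V_\D^{\pi_{k+1}}(x_0) - V_\D^{\pi_k}(x_0) = \sum_{t=0}^{T-1} \E_{x_t \sim \mu_t^{\pi_{k+1}}(\cdot \mid x_0)} \bigl[\langle \pi_{k+1}(\cdot|x_t) - \pi_k(\cdot|x_t),\, Q_\D^{\pi_k}(x_t,\cdot)\rangle\bigr],$$
where $\mu_t^{\pi_{k+1}}(\cdot \mid x_0)$ denotes the distribution of $x_t$ induced by rolling out $\pi_{k+1}$ from $x_0$. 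This is the standard rewriting of $V^{\pi_{k+1}}-V^{\pi_k}$ as an expected advantage under $\pi_{k+1}$, specialised to the cost signal.

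Next, I would bound each inner product via H\"older's inequality, exploiting that $\pi_{k+1}(\cdot|x_t)-\pi_k(\cdot|x_t)$ sums to zero over actions:
$$\langle \pi_{k+1}(\cdot|x_t) - \pi_k(\cdot|x_t),\, Q_\D^{\pi_k}(x_t,\cdot)\rangle \leq \|\pi_{k+1}(\cdot|x_t) - \pi_k(\cdot|x_t)\|_1 \cdot \|Q_\D^{\pi_k}(x_t,\cdot)\|_\infty = 2\,\sigma(x_t) \cdot \|Q_\D^{\pi_k}(x_t,\cdot)\|_\infty,$$
and the uniform bound $\|Q_\D^{\pi_k}\|_\infty \leq T\,\textsc{Dmax}$ follows because per-step costs lie in $[0,\textsc{Dmax}]$ over a horizon of length $T$. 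Summing the $T$ inner-product bounds and pulling out $\max_x \sigma(x)$ gives
$$V_\D^{\pi_{k+1}}(x_0) - V_\D^{\pi_k}(x_0) \leq 2\,T^2\,\textsc{Dmax} \cdot \max_x \sigma(x).$$
Under the hypothesised slow-update condition $\sigma(x) \leq \frac{d_0 - V_\D^{\pi_k}(x_0)}{2\,T^2\,\textsc{Dmax}}$ for every $x$, the right-hand side is at most $d_0 - V_\D^{\pi_k}(x_0)$. Combined with feasibility of $\pi_k$ at $x_0$, this yields $V_\D^{\pi_{k+1}}(x_0) \leq d_0$, i.e.\ feasibility of $\pi_{k+1}$.

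I expect the main obstacle to be checking the finite-horizon performance difference identity carefully: the telescoping is standard but needs slightly delicate bookkeeping at the terminal step. A secondary observation worth flagging is that this argument is in some sense wasteful --- it never invokes the value-based constraint that defines (SPI), only TV closeness and a crude $\ell_\infty$ bound on $Q_\D^{\pi_k}$. This indicates that the sharper everywhere-feasible result (case (II) in the appendix), whose denominator $\max_{x'}\{d_0-\cev{V}_\D^{\pi_k}(x')-d(x')\}$ can be much smaller than $T\,\textsc{Dmax}$, must additionally exploit the (SPI) constraint to replace $\|Q_\D^{\pi_k}\|_\infty$ with a quantity tied to $\cev{V}_\D^{\pi_k}$.
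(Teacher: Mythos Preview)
Your argument for case \textbf{(I)} is correct and coincides with the paper's: the paper derives the same decomposition via a resolvent identity $(I-P_0)^{-1}=(I-P_1)^{-1}(I\pm P_\Delta(I-P_0)^{-1})$, which after multiplying by $d$ gives $V_\D^{\pi_1}(x)=V_\D^{\pi_0}(x)+\E_{\pi_1}\bigl[\sum_t \varepsilon(x_t)\bigr]$ with $\varepsilon(x)=\sum_a\Delta(a|x)\sum_{x'}P(x'|x,a)V_\D^{\pi_0}(x')$. Since $\sum_a\Delta(a|x)=0$, this $\varepsilon(x)$ equals your $\langle\pi_{k+1}-\pi_k,Q_\D^{\pi_k}(x,\cdot)\rangle$, and the paper then applies exactly your bound $|\varepsilon(x)|\le 2\sigma(x)\,T\,\textsc{Dmax}$ and sums over $T$ steps. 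So your Kakade--Langford route and the paper's matrix route are the same argument in different clothing.

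You stop short of proving \textbf{(II)}, but your diagnosis of what is needed is on target and matches the paper. The paper reuses the same decomposition and simply replaces the crude bound $V_\D^{\pi_0}(x')\le T\,\textsc{Dmax}$ by $V_\D^{\pi_0}(x')\le \Theta:=\max_{x''}\{d_0-\cev{V}_\D^{\pi_0}(x'')-d(x'')\}$, which it justifies from everywhere-feasibility together with the value-based (SPI) constraint; the rest of the arithmetic is identical, yielding $|\varepsilon(x_t)|\le 2\sigma(x_t)\Theta$ and hence $V_\D^{\pi_1}(x)\le d_0$ for all $x$. To complete your write-up you would just need to spell out that one substitution.
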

  \begin{proof}
  Similar to the analysis in \citet{chow2018lyapunov}. 
  We aim to show that $V_{\D}^{\pi_{k+1}}(x_0) \leq d_0$. For simplicity we consider $k=0$, and by induction the other cases will follow. We write $P_0 = P^{\pi_0}, P_1 = P^{\pi_1}$, $\Delta(a|x)=\pi_1(a|x)-\pi_0(a|x)$, and $P_\Delta = \left[\sum_{a \in A} \Delta(a|x)P(x'|x,a)\right]_{\{x',x\}}$.
  Note that $(I-P_0)=(I-P_1+P_\Delta)$, and therefore $(I-P_1+P_\Delta)(I-P_0)^{-1} = I_{|\X| \times |\X|}$. Thus, we find $$ (I-P_0)^{-1} = (I-P_1)^{-1}(I_{|\X|\times|\X|} + P_\Delta (I-P_0)^{-1}).$$ Multiplying both sides by the cost vector $d$ one has $$V^{\pi_0}_{\D}(x) = \E \left[ \sum_{t=0}^T d(x_t) + \varepsilon(x_t) \mid \pi_1, x \right],$$ for each $x$, where $\varepsilon(x) = \sum_{a \in A} \Delta(a|x) \sum_{x' \in \X} P(x'|x,a)V_{\D}^{\pi_0}(x')$. Splitting the expectation, we have 
  \begin{align*} V_{\D}^{\pi_1}(x) &= V_{\D}^{\pi_0}(x) - \E\left[ \sum_{t=0}^T \varepsilon(x_t) \mid \pi_1, x \right] \end{align*} 
  For case \textbf{(I)} we note that $V^{\pi_0}_{\D}(x') \leq D_{\MAX}T$ and so $-2\sigma(x_t)D_{\MAX}T \leq \varepsilon(x_t) \ \forall x_t$. Using $\sigma(x_t) \leq (d_0-V_{\D}^{\pi_k})/2D_{\MAX}T^2$ gives $V_{\D}^{\pi_1}(x_0) \leq V^{\pi_0}_{\D}(x_0) - 2D_{\MAX}T^2 (d_0-V^{\pi_0}_{\D}(x_0))/(2D_{\MAX}T^2) = d_0$, i.e. $\pi_0$ is feasible at $x_0$. \\
For case \textbf{(II)} we note that $V^{\pi_0}_{\D}(x) \leq \max_{x'}\{d_0-\cev{V}^{\pi_0}_{\D}(x')-d(x')\} \eqqcolon \Theta$ since $\pi_0$ is feasible at every $x$. As before, we have $-2\sigma(x_t)\Theta \leq \varepsilon(x_t) \ \forall x_t$ and so $V_{\D}^{\pi_1}(x) \leq V^{\pi_0}_{\D}(x) - 2\Theta T (d_0-V^{\pi_0}_{\D}(x))/(2\Theta T) = d_0 \ \forall x$, i.e. $\pi_1$ is feasible everywhere.
  \end{proof}
  \begin{theorem}
Let $\pi_n$ and $\pi_{n+1}$ be successive policies generated be the policy iteration algorithm of  \eqref{eq:pi}. Then $V^{\pi_{n+1}} \geq V^{\pi_n}$.
\end{theorem}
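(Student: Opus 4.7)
The plan is to mirror the classical policy-improvement argument from unconstrained policy iteration, with the additional step of verifying that the old policy remains a feasible point of the constrained optimization so that the argmax can only do weakly better. Concretely, I would set things up as follows: fix a state $x$ and write $V^{\pi_k}(x) = \langle \pi_k(\cdot|x), Q^{\pi_k}(x,\cdot) \rangle$, and similarly $V_\D^{\pi_k}(x) = \langle \pi_k(\cdot|x), Q_\D^{\pi_k}(x,\cdot) \rangle$. The two ingredients I need are (a) $\pi_k$ lies in the feasible set of the linear program defining $\pi_{k+1}$, and (b) the standard one-step improvement inequality propagates through the Bellman recursion for $V^{\pi_k}$.

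For (a), plugging $\pi = \pi_k$ into the value-based constraint in \eqref{eq:pi} gives the requirement $V_\D^{\pi_k}(x) + \cev{V}_\D^{\pi_k}(x) - d(x) \leq d_0$ for every $x$. Since the state-wise value-based constraints are precisely what the policy iteration procedure is designed to preserve (this is the content of the consistent-feasibility argument from Theorem~\ref{thm:consistent-fesibility}), $\pi_k$ is a feasible point of the LP at iteration $k+1$. Because $\pi_{k+1}$ is the argmax of $\langle \pi(\cdot|x), Q^{\pi_k}(x,\cdot) \rangle$ over feasible policies, we immediately obtain
\begin{equation*}
\langle \pi_{k+1}(\cdot|x), Q^{\pi_k}(x,\cdot)\rangle \;\geq\; \langle \pi_k(\cdot|x), Q^{\pi_k}(x,\cdot)\rangle \;=\; V^{\pi_k}(x).
\end{equation*}

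For (b), I would then use the standard telescoping/unrolling argument: substituting the above inequality into the Bellman recursion for $V^{\pi_k}$, we get $V^{\pi_k}(x) \le \E_{a \sim \pi_{k+1}(\cdot|x)}[r(x,a) + \E_{x' \sim \P(\cdot|x,a)}[V^{\pi_k}(x')]]$. Iterating this inequality along trajectories sampled under $\pi_{k+1}$ (and applying the inequality to the state $x'$, then $x''$, and so on) bounds $V^{\pi_k}(x)$ by the expected finite-horizon return of $\pi_{k+1}$ starting from $x$, which is exactly $V^{\pi_{n+1}}(x)$. Thus $V^{\pi_{n+1}}(x) \geq V^{\pi_n}(x)$ at every $x$. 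Monotone convergence of $(V^{\pi_n})_{n \geq 0}$ then follows because the sequence is monotonically nondecreasing and uniformly bounded above by $T \cdot \textsc{Rmax}$.

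The only subtle step I expect to trip over is (a): one must be careful that ``$\pi_k$ is feasible for the LP at step $k+1$'' really does follow from the slow-update/consistent-feasibility regime of Theorem~\ref{thm:consistent-fesibility}, rather than only from $\pi_k$ being CMDP-feasible at $x_0$. If one instead assumes that the LP at every iteration has a nonempty feasible set containing $\pi_k$ (which is implicit in the fact that the procedure is well-defined), then (a) is immediate and the rest of the argument is routine. I would therefore state (a) as a direct consequence of the inductive hypothesis of consistent feasibility, and spend the bulk of the written proof on the one-step improvement inequality and its telescoping, which is the part most readers will want to see in detail.
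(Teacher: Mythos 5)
Your proof takes essentially the same route as the paper's: the paper's entire argument is the two-line observation that $\pi_n$ and $\pi_{n+1}$ are both feasible solutions of the LP \eqref{eq:pi} and $\pi_{n+1}$ is the maximizer, so the result follows. Your version is in fact more careful than the paper's --- you correctly note that the LP maximizes the one-step surrogate $\langle \pi(\cdot|x), Q^{\pi_n}(x,\cdot)\rangle$ rather than $V^{\pi}$ itself and supply the standard telescoping step needed to conclude $V^{\pi_{n+1}} \geq V^{\pi_n}$, and you explicitly flag the same implicit assumption (that $\pi_n$ satisfies the state-wise value-based constraint at every $x$, not merely CMDP-feasibility at $x_0$) that the paper's proof also glosses over.
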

  \begin{proof}
Note that $\pi_{n+1}$ and $\pi_n$ are both feasible solutions of the LP \eqref{eq:pi}. Since $\pi_{n+1}$ maximizes $V^{\pi}$ over all feasible solutions, the result follows. 
\end{proof}

\section{Analytical Solution of the Update - Discrete Case}
\label{app:discrete-softmax-solution}

We follow the same procedure as \citet[Section~E.1]{chow2018lyapunov} to convert the problem to its Shannon entropy regularized version:
\begin{align}
  \max_{\pi \in \Delta} & \quad \pi(.|x)^T (Q(x, .) + \tau \log \pi(.|x) ), \nonumber \\
  \text{s.t. } &\quad \pi(.|x)^T Q_\D(x, .) + \cev{V}_\D^{\pi}(x) - d(x) \leq d_0 ,  \label{eq:reg-pi}
\end{align}
where $\tau > 0$ is a regularization constant. Consider the  Lagrangian problem for optimization:
\begin{equation*}
  \max_{\lambda \geq 0} \max_{\pi \in \Delta} \Gamma_x(\pi , \lambda) = \pi(.|x)^T (Q(x, .) + \lambda Q_{\D}(x, .)+ \tau \log \pi(.|x) )  + \lambda (d_0 + d(x) - \cev{V}(x))
\end{equation*}

From entropy-regularized literature \citep{neu2017unified}, the inner $\lambda$-solution policy has the form:
\begin{equation*}
  \pi^*_{\Gamma, \lambda}(.|x) \propto \exp \left( - \frac{Q(x,.) + \lambda Q_{\D}(x,.)}{\tau} \right)
\end{equation*}

We now need to solve for the optimal lagrange multiplier $\lambda^*$ at $x$.
\begin{equation*}
  \max_{\lambda \geq 0} - \tau \logsumexp \left( - \frac{Q(x,.) + \lambda Q_{\D}(x,.)}{\tau} \right) + \lambda (d_0 + d(x) - \cev{V}_{\D}(x)),
\end{equation*}

where $\logsumexp(y) = \log \sum_{a}exp(y_a)$ is a convex function in $y$, and objective is a concave function of $\lambda$.  Using KKT conditions, the $\nabla_{\lambda}$ gives the solution:
\begin{equation*}
   (d_0 + d(x) - \cev{V}_\D(x)) - \frac{\sum_a Q_{\D}(x, a) \exp(\left( - \frac{Q(x,a) + \lambda Q_{\D}(x,a)}{\tau} \right))}{ \sum_a \exp(\left( - \frac{Q(x,a) + \lambda Q_{\D}(x,a)}{\tau} \right))} = 0
\end{equation*}

Using parameterization of $z = \exp(-\lambda)$, the above condition can be written as polynomial equation in $z$:
\begin{equation*}
    \sum_a \left( d_0 + d(x) - \cev{V}_\D(x) - Q_{\D}(x, a) \right) . \left( \exp (- \frac{Q(x,a)}{\tau}) \right) z^{\frac{Q_{\D}(x,a)}{\tau}} = 0
\end{equation*}

The roots to this polynomial will give $0 \leq z^*(x) \leq 1$, using which one can find $\lambda^*(x) = - \log(z^*(x))$. The roots can be found using the Newton's method. The final optimal policy of the entropy-regularized process is then:
\begin{equation*}
  \pi^*_{\Gamma} \propto \exp \left( - \frac{Q(x,\cdot) + \lambda^* Q_{\D}(x,\cdot)}{\tau} \right)
\end{equation*}

\section{Extension of Safety Layer to Stochastic Policies with Gaussian Paramterization}
\label{app:safety-stochastic}

Consider stochastic gaussian policies parameterized by mean $\mu(x ; \theta)$ and standard-deviation $\sigma(x ; \phi)$, and the actions sampled have the form $\mu(x ; \theta) + \sigma(x ;\phi) \epsilon$, where $\epsilon \sim \mathcal{N}(0,I)$ is the noise. Here, $<\mu(x; \theta), \sigma(x; \phi)>$ are both deterministic w.r.t. the parameters $\theta, \phi$ and $x$, and as such both of them together can be treated in the same way as deterministic policy ($\pi(x) = <\mu(x), \sigma(x)>$). The actual action sampled and executed in the environment is still stochastic, but we have moved the stochasticity fron the policy to the environment. This allows us to define and work with action-value functions $Q_{\D}(x, \mu_{\pi}(x), \sigma_{\pi}(x))$. In this case, the corresponding projected actions have the form $\mu' + \sigma' \epsilon$. The main objective of the safety layer (without the constraints) can be further simplified as:

\begin{align*}
  \argmin_{\mu', \sigma'} &\ \E_{\epsilon \sim \mathcal{N}(0,I)} \left[ \frac{1}{2} \norm{(\mu' + \sigma' \epsilon) - (\mu_{\pi}(x) + \sigma_{\pi}(x) \epsilon)  }^2 \right] \\
  \argmin_{\mu', \sigma'} &\ \E_{\epsilon \sim \mathcal{N}(0,I)} \left[ \frac{1}{2} \norm{(\mu' - \mu_{\pi}(x)) + (( \sigma' - \sigma_{\pi}(x))  \epsilon)  }^2 \right] \\
  \argmin_{\mu', \sigma'} &\ \frac{1}{2} \E_{\epsilon \sim \mathcal{N}(0,I)} \left[
  \norm{\mu' - \mu_{\pi}(x)}^2 + \norm{( \sigma' - \sigma_{\pi}(x))  \epsilon}^2 + \underbrace{2 < \mu' - \mu_{\pi}(x) , (  \sigma' - \sigma_{\pi}(x))  \epsilon>}_{=0, \text{due to linearity of expectation}, \epsilon \sim \mathcal{N}(0,I)} \right] \\
  \argmin_{\mu', \sigma'} &\ \frac{1}{2} \left( \norm{\mu' - \mu_{\pi}(x)}^2 + \E_{\epsilon \sim \mathcal{N}(0,I)} \left[ \norm{( \sigma' - \sigma_{\pi}(x))  \epsilon}^2 \right] \right) \\
  \argmin_{\mu', \sigma'} &\ \frac{1}{2} \left( \norm{\mu' - \mu_{\pi}(x)}^2 + \norm{( \sigma' - \sigma_{\pi}(x))}^2 \underbrace{\E_{\epsilon \sim \mathcal{N}(0,I)} \left[  \norm{\epsilon}^2 \right]}_{=1 , \text{second moment of $\epsilon$}} \right) \\
  \argmin_{\mu', \sigma'} &\ \frac{1}{2} \left( \norm{\mu' - \mu_{\pi}(x)}^2 + \norm{( \sigma' - \sigma_{\pi}(x))}^2 \right)
\end{align*}

As both $\mu_{\pi}(.;\theta)$ and $\sigma_{\pi}(.; \phi)$ are modelled by independent set of parameters (different neural networks, usually) we can solve each of the safety layer problem independently, w.r.t. only those parameters.

\section{Analytical Solution in Safety Layer}
\label{app:safety-analytical}

The proof is similar to the proof of the Proposition 1 of \citet{dalal2018safe}. We have the following optimization problem:
\begin{align*}
  &\argmin_{\mu} \left[ \frac{1}{2} \norm{(\mu  - \mu_{\pi}(x)}^2 \right], \\
  &{\small \texttt{s.t.}}\quad \cev{V}^{\pi}_{\D}(x)  - d(x) + Q^{\pi}_{\D}(x, \mu_{\pi}(x)) + (\mu - \mu_{\pi}(x))^T (\nabla Q^{\pi}_{\D}(x, \mu)|_{\mu = \mu_{\pi}(x)}) \leq d_0 
\end{align*}

As the objective function and constraints are convex, and the feasible solution, $\mu^*, \lambda^*$,  should satisfy the KKT conditions. We define $\epsilon(x) = (d_0 +  d(x) - \cev{V}^{\pi}_{\D}(x) - Q^{\pi}_{\D}(x, \mu_{\pi}(x)))$, and $g_{\mu, \D}(x) = \nabla Q_{\D}^{\pi}(x, u)|_{u = \mu_{\pi}(x)}$. Thus, we can write the Lagrangian as: 

\begin{align*}
    L(\mu, \lambda) &= \frac{1}{2} \norm{(\mu  - \mu_{\pi}(x)}^2 + \lambda ((\mu - \mu_{\pi}(x))^T g_{\mu, \D}(x) - \epsilon(x))
\end{align*}

From the KKT conditions, we get:
\begin{align}
    \nabla_{\mu} L = \mu - \mu_{\pi}(x) + \lambda g_{\mu, \D}(x) &= 0 \label{eq:safety-kkt-1}\\ 
    (\mu - \mu_{\pi}(x))^T g_{\mu, \D}(x) - \epsilon(x) &= 0 \label{eq:safety-kkt-2} 
\end{align}

From Eq.~\eqref{eq:safety-kkt-1}, we have:
\begin{align}
    \mu^* = \mu_{\pi}(x) - \lambda^* (x)\cdot g_{\mu, D}(x) \label{eq:safety-kkt-3}
\end{align}

Substituting Eq.~\eqref{eq:safety-kkt-3} in Eq.~\eqref{eq:safety-kkt-2}, we get: 
\begin{align}
    - \lambda^* (x)\cdot g_{\mu, D}(x)^T g_{\mu, D}(x) - \epsilon(x) = 0 \nonumber \\ 
    \lambda^* = \frac{- \epsilon(x)}{g_{\mu, D}(x)^T g_{\mu, D}(x)} \nonumber
\end{align}

When the constraints are satisfied ($\epsilon(x) > 0$), the $\lambda$ should be inactive, and hence we have $()^{+}$ operator, that is $0$ for negative values.

\section{Details of Grid-World Experiments}
\label{app:grid-details}

\subsection{Architecture and Training details}
We use one-hot encoding of the agent's location in the grid as the observation, i.e. $x$ is a binary vector of dimension $\Real^{12 \times 12}$.   The agent is trained for 200k episodes, and the current policy's performance is evaluated after every 1k episodes.

The same three layer neural network with the architecture is used for state encoding for all the different the estimators. The feed-forward neural network has hidden layers of size 64, 64, 64, and relu activations. For the state-action value based estimators, the last layer is a linear layer with $4$ outputs, for each action. For value function based estimators the last layer is linear layer with a single output.

We use Adam Optimizer for training all the estimators. A learning rate of 1e-3 was selected for all the reward based estimators and a learning rate of 5e-4 was selected for all the cost based estimators. The same range of learning rate parameters for considered for all estimators i.e. \{1e-5, 5e-5, 1e-4, 5e-4, 1e-3, 5e-3, 1e-2, 5e-2, 1e-1\}.

We use n-step trajectory length in A2C with $n=4$, i.e., trajectories of length $n$ were collected and the estimators were updated to used via the td-errors based on that. We use the number of parallel agents $20$ in all the experiments. The range of parameters considered was $n \in \{1, 4 , 20\}$. The same value of $n$ was used for all the baselines.

\section{Details of the MuJoCo Experiments}
\label{app:mujoco-details}
\subsection{Environments Description}

\begin{figure}[htb]
    \begin{minipage}[t]{.3\textwidth}
        \centering
        \includegraphics[width=\textwidth]{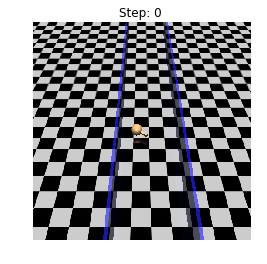}
        \subcaption{Point-Circle}
        \label{fig:point-circle}
    \end{minipage}
    \hfill
    \begin{minipage}[t]{.3\textwidth}
        \centering
        \includegraphics[width=\textwidth]{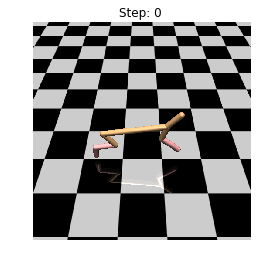}
        \subcaption{Safe-Cheetah}
        \label{fig:safe-cheetah}
    \end{minipage}
    \hfill
    \begin{minipage}[t]{.3\textwidth}
        \centering
        \includegraphics[width=\textwidth]{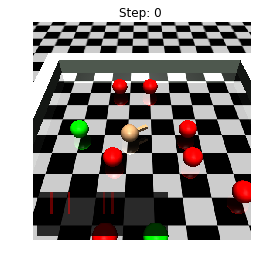}
        \subcaption{Point-Gather}
        \label{fig:point-gather}
    \end{minipage}  
    \caption{MuJoCo Safety Environments}
    \label{fig:mujoco-envs}
\end{figure}

\begin{itemize}
    \item \textbf{Point-Gather}: The environment (Fig.\ref{fig:point-gather}) is taken from \citet{achiam2017constrained}, where the point mass agent gets a reward of $+10.0$ for collecting a green apple, and a cost of 1 for collecting a red bomb. Two apples and eight bombs are spawned randomly at the start of each episode. The constraints are defined over the nmber of bombs collected over the episode. Episode horizon is 15 and threshold $d_0 = 4$.
    
    \item \textbf{Safe-Cheetah}: This environment (Fig.\ref{fig:safe-cheetah}) is taken from \citet{chow2019lyapunov}. A bi-pedal agent (HalfCheetah-v0) is augmented with speed safety constraints. The agent gets the reward based on the speed with which it runs, and the constrain is define on the speed to be less than 1, i.e., it gets a constraint cost based on $\mathbbm{1}[|v| > 1]$, where $v$ is the velocity at the state. The maximum length of the episode is $200$ and the constraint threshold is $d0 = 50$.
    
    \item \textbf{Point-Circle}: This environment (Fig.\ref{fig:point-circle}) is taken from \citet{achiam2017constrained}.
    The point-mass agent is rewarded for running along the circumference of a circle of radius $15$ in counter-clockwise direction, with the reward and cost function:
    \begin{align*}
        R(s) &= \frac{v^T [-y,x]}{1 + |\norm{[x,y]}_2 - 15|},  \\
        C(s) &= \mathbbm{1}[|x| > 2.5],
    \end{align*}
    where $x,y$ are coordinates in the plane and $v$ is the velocity. The length of the episode is $65$ and the constraint threshold $d_0 = 10.0$.
    
\end{itemize}

\subsection{Network Architecture and training details}

The architecture and the training procedure is based on the open-source implementations \citep{pytorchrl}.  All the value based estimators use a network architecture of 2 hidden layers of size 200, 50 hidden units with tanh non-linearity, followed by a linear layer with single output. For the actor, we model mean using a network architecture of 2 hidden layers of size 100, 50 hidden units with tanh non-linearity, followed by a linear layer with dimensions of the action-space and tanh non-linearity. For the $Q(x, \mu)$ we also a 2 layer neural network with 200, (50 + action-dimension) hidden units and tanh non-linearity. We concatenate the mean in the second layer, and add a linear layer with single output in the end.

Entropy regularization with $\beta = 0.001$ was used for all the experiments and the baselines. The trajectory length for different environments.  For PPO, GAE with $\lambda=0.95$ was used for every algorithm. 20 parallel actors were used for every algorithm for each experiment.  We searched the trajectory length hyper-parameter in the range {5,20,1000} for every environment.

We use Adam Optimizer for training all the estimators. The learning rate of the critic is always 0.5 the  learning rate of the actor. For the cost estimators, the same learning rate was used for forward and backward estimators. The same range of learning rate parameters for considered for all estimators i.e. \{1e-5, 5e-5, 1e-4, 5e-4, 1e-3, 5e-3, 1e-2, 5e-2, 1e-1\}.

\subsection{Other details}

We run the experiments with and without the use of recovery policies (in the same procedure as the baselines), and chose the run that performs the best. 
In order to take error due to function approximation into account, \citet{achiam2017constrained} use cost-shaping to smooth out the sparse constraint,  and \citet{chow2019lyapunov} use a relaxed threshold, i.e. $d_0 \cdot (1 - \delta)$, instead of $d_0$, where $\delta \in (0,1)$. We run experiments with $\delta = \{0.0, 0.2\}$ for each algorithms, and use the best among them. We found that empirically, only for Safe-Cheetah $\delta = 0.2$ works better compared to $\delta=0.0$.

\section{Algorithm Details}
\label{app:alg}

\subsection{n-step Synchronous SARSA}

The algorithm for n-step Synchronous SARSA is similar to the n-step Asynchronous Q-learning of \citet{mnih2016asynchronous}, except that it uses SARSA instead of Q-learning, is synchronous, and instead of greedy maximization step of $\epsilon$-greedy we use \eqref{eq:pi}. When working with discrete actions and deterministic policies, this can be solved as part of the computation-graph itself.  

\subsection{A2C}

In Actor Critic \citep{konda2000actor} algorithms, the parameterized policy (actor) is denoted by $\pi(a|x; \theta)$, and is updated to minimizing the following loss:
\begin{align*}
 L(\theta) &= \E[ - \log \pi(a_t|x_t; \theta) (r_t + \gamma V^{\pi}(x_{t+1} - V_{x_t}))] 
\end{align*}
 
The algorithm for A2C with Safety Layer given by Eq.~\eqref{eq:safety-layer} is similar to the Synchronous version of Actor-Critic \citep{mnih2016asynchronous}, except that it has estimates for the costs and safety layer. Note that due to the projection property of the safety layer, it is possible to sample directly from the projected mean. Also, as the projection is a result of vector products and max, it is differentiable and and computed in-graph (via relu).  The algorithm is presented in Algorithm~\ref{alg:safety-layer-A2C}.

\subsection{PPO}

The PPO algorithm build on top of the Actor-Critic algorithm and is very similar to Algorithm~\ref{alg:safety-layer-A2C}. The main difference is how the PPO loss for the actor is defined as:
\begin{align*}
    L^{CLIP}(\theta) &= \E[\min(\rho_t(\theta) A_t, clip(\rho_t(\theta), 1-\epsilon, 1+\epsilon) A_t)],
\end{align*}

where the likelihood ration is $\rho_t(\theta) = \frac{\pi_{\theta}(a_t|x_t)}{\pi_{\theta_{old}}(a_t|x_t)}$, with $\pi_{old}$ being the  policy parameters before the update, $\epsilon < 1$ is a hyper-parameters that controls the clipping and $A_t$ is the generalized advantage estimator:
\begin{align*}
    A_t^{GAE(\lambda, \gamma)} &= \sum_{k=0}^{T-1} (\lambda \gamma)^k \delta_{t+k}^{V^{\pi}},
\end{align*}

where $T$ is the maxmimum number of timestamps in an episode trajectory, and $\delta_j$ denotes the TD error at $j$. The value function is updated using the $\gamma  \lambda$-returns from the GAE:
\begin{align*}
    L(\phi) &= \E[(V^{\pi}(x; \phi) - (V^{\pi}(x; \phi_{old}) + A_{t}) )^{2}].
\end{align*}

Similar to the the forward value estimates the backward value estimates are defined in the similar sense. One way to think of it is to assume the trajectories are reversed and we are doing the regular GAE estimation for the value functions. 

The GAE updates for the regular value function can be seen in the $\lambda$-operator form as:
\begin{align*}
    \T_{\lambda}^{\pi} v^{\pi} &= (I - \gamma \lambda P^{\pi})^{-1}(r^{\pi} + \gamma P^{\pi}v^{\pi} - v^{\pi}) + v^{\pi} .
\end{align*}

In similar spirit it can be shown that the $\lambda$-operator for SARSA has the form:
\begin{align*}
    \T_{\lambda}^{\pi} q^{\pi} &= (I - \lambda \gamma P^{\pi})^{-1} (\T^{\pi} q^{\pi} - q^{\pi}) + q^{\pi}, 
\end{align*}
where $(\T^{\pi} q^{\pi} - q^{\pi})$ denotes the TD error. Thus, the GAE estimates can be applied for the Q-functions in the similar form, i.e.
\begin{align*}
    B_t^{GAE(\lambda, \gamma)} &=  \sum_{k=0}^{T-1} (\lambda \gamma)^k \delta_{t+k}^{Q_\D^{\pi}}, \\
    L(\theta_\D) &= \E[(Q_\D^{\pi}(x, a; \theta_\D) - (Q_\D^{\theta_\D}(x, a; \theta_{\D_{old}}) + B_{t}) )^{2}].
\end{align*}

\section{Target based updates}
\label{app:target-updates}

An alternate way to implement the safety layer can be done
as proposed by \citet{chow2020safe} using target networks \cite{mnih2016asynchronous}. A target network for a policy, $\pi_{\omega}$,  denotes a copy of the parameterized policy at a previous iteration that is used for calculating the safety constraint in the safety layer. Using this notion, the safety layer projection objective as defined in Equation~\ref{eq:safety-layer} can be written as:
\begin{align*}
    &\argmin_{\mu} \left[ \frac{1 - \kappa}{2} \norm{\mu  - \mu_{\pi}(x)}^2 \right] +  \left[ \frac{\kappa}{2} \norm{\mu  - \mu_{\pi_{\omega}}(x)}^2 \right] , \\ 
  &{\small \texttt{s.t.}}\quad \cev{V}^{\pi_{\omega}}_{\D}(x)  - d(x) + Q^{\pi_{\omega}}_{\D}(x, \mu_{\pi_{\omega}}(x))   
   + (\mu - \mu_{\pi_{\omega}}(x))^T (\nabla Q^{\pi_{\omega}}_{\D}(x, \mu)|_{\mu = \mu_{\pi_{\omega}}(x)}) \leq d_0,  
\end{align*}
where $\kappa$ is a hyper-parameter that controls the trade-off between the unconstrained policy and the target policy.

Using the exact same procedure in Appendix~\ref{app:safety-analytical}, we can get that for any state $x \in \X$ the analytical solution to the equation above is given by:
\begin{equation*}
        \mu^*(x) = (1-\kappa) \mu_{\pi}(x) + \kappa \, \mu_{\pi_{\omega}}(x) -  \lambda^*(x) \cdot g_{\mu_{\omega}, \D}(x),
    \end{equation*}
where,  
\begin{align*}
        g_{\mu_{\omega}, \D}(x) &= \nabla Q^{\pi_{\omega}}_{\D}(x, \mu)|_{\mu = \mu_{\pi_{\omega}}(x)},  \\ 
        \epsilon(x) &= (d_0 + d(x) - \cev{V}^{\pi_{\omega}}_{\D}(x)  - Q^{\pi_{\omega}}_{\D}(x, \mu_{\pi_{target}}(x)), \\
        \lambda^* (x) &= \left( \frac{ (1 - \kappa) g_{\mu_{\omega},\D}(x)^T (\mu - \mu_{\pi_{\omega}})  - \epsilon(x) }{ g_{\mu_{\omega}, \D}(x)^T g_{\mu_{\omega}, \D}(x) } \right)^{+}.
\end{align*}

\begin{figure*}
   \centering
   \small
\begin{tabular}{cc}
\multicolumn{2}{c}{Returns} \\
\includegraphics[width=0.4\textwidth]{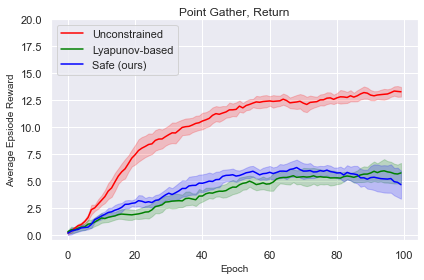}&
\includegraphics[width=0.4\textwidth]{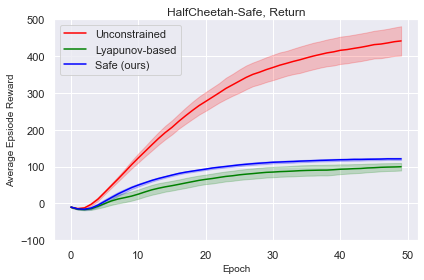}\\
\multicolumn{2}{c}{Constraints} \\
\includegraphics[width=0.4\textwidth]{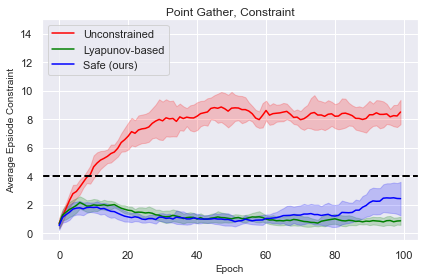}&
\includegraphics[width=0.4\textwidth]{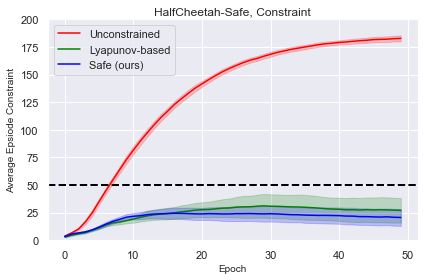}\\
(a) Point-Gather&
(b) Safe-Cheetah\\
\end{tabular}
    \caption{\small PPO Performance over the training for Unconstrained (red), Lyapunov-based (green), and our method (blue) all trained with PPO and using a target network based update rule, on MuJoCo tasks over 10 random seeds. The x-axis is the number of episodes in thousands, and y-axis denotes the undiscounted accumulated returns. The dotted black line denotes $d_0$. The bold line represents the mean, and the shaded region denotes the 80\% confidence-intervals.
    }
    \label{fig:target-ppo-results} 
\end{figure*}

The target-network based implementation is easier to implement and computationally faster, however at the same time it introduces the hyper-parameter $\kappa$ that controls the trade-off between reward maximization and constraint satisfaction, and another hyper-parameter for the rate at which the the target networks are updated.
We tested this implementation with PPO on Point-Gather and Safe-Cheetah MuJoCo environments. We report the empirical results in Figure~\ref{fig:target-ppo-results}. We observe that the trends are similar with respect to constraint satisfaction, even though there is some variation in the magnitude of actual return collected by the agent.

\end{document}